\documentclass{arxiv}

\usepackage{times}
\usepackage{mathtools}
\usepackage{microtype}
\usepackage{enumitem}
\usepackage{booktabs}
\usepackage{array}
\setlist{itemsep=2pt,topsep=4pt}
\title[Distribution-Free Feature Learning]{A Distribution-Free Separation Between Learned and Fixed Features}
\altauthor{\Name{Daniel Barzilai} \Email{dbarzilai7@gmail.com}\\\addr{Weizmann Institute of Science}}
\hypersetup{
 pdftitle={A Distribution-Free Separation Between Learned and Fixed Features},
 pdfauthor={Daniel Barzilai},
 pdfkeywords={distribution-free learning, dimension complexity, feature learning, neural networks}
}

\newcommand{\R}{\mathbb R}
\newcommand{\E}{\mathbb E}
\newcommand{\Prob}{\mathbb P}
\newcommand{\ind}{\mathbf 1}
\newcommand{\relu}{\operatorname{ReLU}}
\newcommand{\sign}{\operatorname{sign}}
\newcommand{\dc}{\operatorname{dc}}
\newcommand{\err}{\operatorname{err}}
\newcommand{\Hcal}{\mathcal H}
\newcommand{\Xcal}{\mathcal X}
\newcommand{\Dcal}{\mathcal D}

\newcommand{\norm}[1]{\lVert#1\rVert}
\newcommand{\ip}[2]{\langle #1,#2\rangle}
\newcommand{\cnt}[2]{\lvert#1\rvert_{#2}}
\newcommand{\vx}{\mathbf x}
\newcommand{\vv}{\mathbf v}
\newcommand{\vu}{\mathbf u}
\newcommand{\vz}{\mathbf z}
\newcommand{\vw}{\mathbf w}
\newcommand{\va}{\mathbf a}
\newcommand{\vb}{\mathbf b}
\newcommand{\vtheta}{\boldsymbol\theta}
\newcommand{\vzero}{\mathbf 0}
\newcommand{\vone}{\mathbf 1}
\newcommand{\vxi}{\boldsymbol\xi}
\newcommand{\vphi}{\boldsymbol\phi}

\newcommand{\Cheb}{\operatorname{Cheb}}
\newcommand{\bb}{\overline b^{(0)}}

\begin{document}
\maketitle
\begin{abstract}
The success of neural networks is largely attributed to their ability to learn useful representations from training data. 
However, known results that prove a computational advantage of learned features over fixed ones are restricted to specific input distributions.
We settle the COLT open question of \citet{FKS} by showing that this advantage can persist in the distribution-free setting.

We construct a family of hypothesis classes that can be efficiently learned by a natural gradient-based algorithm on a two-layer ReLU network, but cannot be represented by linear predictors over any fixed feature map with polynomial dimension (and runtime). Specifically, for any $\varepsilon\in(0,1/4)$ and any sufficiently large runtime budget $R$ (depending on $\varepsilon$), there exists a hypothesis class $\Hcal$ that the neural network learns under any input distribution to expected classification error at most $\varepsilon$ in time at most $R$, while the dimension complexity (governing the dimension needed by linear predictors) is at least $R^{\Omega((\log R)^{1/3})}$.
\end{abstract}
\begin{keywords}
distribution-free learning, dimension complexity, feature learning,
neural networks
\end{keywords}

\section{Introduction}\label{sec:intro}

What can a neural network trained by gradient-based methods efficiently learn that a linear predictor over a fixed feature map cannot? Existing results establish such a computational advantage in many settings (see Section~\ref{sec:related} for examples). But perhaps surprisingly, these results are all \emph{distribution dependent}, meaning they apply only to a specific input distribution. This prompts the question: can feature learning provably provide a computational advantage in the \emph{distribution-free} setting, where the learning algorithm must succeed under every input distribution?

More concretely, a standard measure of the power of fixed features is the \emph{dimension complexity} $\dc(\Hcal)$ of a
hypothesis class $\Hcal$ on a domain $\Xcal$ \citep{BenDavid}, defined as the smallest dimension $p$ such that there exists a feature map $\varphi:\Xcal\to\R^p$ allowing linear representation
of $\Hcal$ (i.e.,\ $\forall h\in\Hcal,~\exists \vw\in\R^p~\text{such that}~\forall \vx\in \Xcal\,,
h(\vx)=\sign(\langle \vw,\varphi(\vx)\rangle)$). Using this notion, \citet{FKS} raised the following question (informally stated here): 
\begin{center}
\emph{Is there a constant $C$ such that any \textbf{natural} neural network and gradient-based algorithm that learns a class $\Hcal$ to small error under every input distribution requires runtime at least $C\cdot \dc(\Hcal)$?}
\end{center}
The qualification ``natural'' is important. It is well known that the dimension complexity of a hypothesis class can be much larger than its VC dimension \citep{BenDavid}, so empirical risk minimization (ERM) can have low sample complexity even when fixed feature maps require a large dimension. 
However, for standard neural networks, ERM can be computationally hard, and the VC dimension often does not capture the complexity of the learning problem. 
For example, parities over $d$ bits have dimension complexity
$2^{\Omega(d)}$, but can be learned efficiently by Gaussian elimination. There are highly specialized neural networks that can simulate general learning algorithms, and thus learn parities efficiently by simulating Gaussian elimination \citep{AbbeSandon,AbbeKMSS}. However, it is strongly believed that neural networks used in practice, trained by standard gradient-based methods, cannot solve parities efficiently, and suffer a runtime lower bound similar to that of the dimension complexity \citep{shalev2017failures,barak2022hidden}.

A useful surrogate for the learnability of function classes using natural learning algorithms is given by the statistical query (SQ) complexity \citep{Kearns,Feldman17}, which captures the behavior of noise-tolerant algorithms. Recently, \citet{Patel} answered a related question of \citet{FKS} for SQ algorithms in the negative. 
Patel considers disjunctions of $m$ conjunctions, each defined on its own known block of $m^2$ Boolean variables. Their proof combines recent agnostic SQ learners for conjunctions \citep{DKR,CPS} with SQ boosting \citep{AslamDecatur}, yielding, for each integer $m\ge2$, an SQ algorithm with runtime and query complexity $2^{\widetilde O(m^{2/3})}$ and tolerance $2^{-\widetilde O(m^{2/3})}$. Meanwhile, the dimension complexity of the same hypothesis class is $2^{\Omega(m)}$, establishing the separation. However, the SQ algorithm used in their proof is specialized and not a neural network training algorithm. The connection between SQ and algorithms such as SGD used to train neural networks is heuristic, and it is well known that there are cases where the SQ framework fails to capture the complexity of gradient-based learning algorithms (see, e.g., \citet{barzilai26a} for some examples). 

\paragraph{Our contribution.}
We show that a natural gradient-based algorithm training a two-layer ReLU network learns the same hypothesis class considered by \citet{Patel}, establishing a distribution-free separation from linear predictors over fixed features.
Specifically, the following corollary of our main theorem (presented later in Section~\ref{sec:main}) states the separation:
\begin{corollary}\label{cor:separation}
For every $\varepsilon\in(0,1/4)$ and every sufficiently large $R$
(depending on $\varepsilon$), there exists a binary hypothesis class
$\Hcal$ that Algorithm~\ref{alg:training} learns distribution-free to
expected classification error at most $\varepsilon$ in time at most $R$,
whereas $\dc(\Hcal)\ge R^{\Omega((\log R)^{1/3})}$.
\end{corollary}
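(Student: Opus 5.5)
This corollary follows from the main theorem of Section~\ref{sec:main} together with the dimension complexity lower bound of \citet{Patel}, after matching parameters. Take $\Hcal=\Hcal_m$ to be the class of disjunctions of $m$ conjunctions, each on its own known block of $m^2$ Boolean variables, so the input dimension is $d=m^3$. I will assume the main theorem gives the following guarantee: for each $m$ and each $\varepsilon\in(0,1/4)$, Algorithm~\ref{alg:training} with suitably chosen width, step size and sample size learns $\Hcal_m$ under every input distribution to expected error at most $\varepsilon$, in runtime
\[
T(m,\varepsilon)\le \exp\bigl(C_\varepsilon\, m^{2/3}\log^{c} m\bigr)
\]
for constants $c$ and $C_\varepsilon$. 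This should mirror the SQ bound $2^{\widetilde O(m^{2/3})}$. The second ingredient is the known bound $\dc(\Hcal_m)\ge 2^{\Omega(m)}$, which holds independently of $\varepsilon$.

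\textbf{Step 1: choose $m$ from the budget.} Given a large $R$, let $m$ be the largest integer with $T(m,\varepsilon)\le R$. We need $m\ge2$, which is where ``$R$ sufficiently large depending on $\varepsilon$'' enters. By maximality, $T(m+1,\varepsilon)>R$. This gives
\[
\log R< C_\varepsilon (m+1)^{2/3}\log^{c}(m+1),
\qquad\text{hence}\qquad
m\ge \Omega_\varepsilon\!\left(\frac{(\log R)^{3/2}}{(\log\log R)^{3c/2}}\right).
\]
Running time is monotone in the budget, so the algorithm runs within time $R$ and achieves error $\varepsilon$ on $\Hcal=\Hcal_m$.

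\textbf{Step 2: convert the lower bound.} We have $\dc(\Hcal_m)\ge 2^{\Omega(m)}=R^{\Omega(m/\log R)}$. Using Step 1,
\[
\frac{m}{\log R}\gtrsim \frac{(\log R)^{1/2}}{\operatorname{polylog}\log R}.
\]
This is at least $(\log R)^{1/3}$ once $R$ is large. So the log factors are absorbed by weakening the exponent from $1/2-o(1)$ to the stated $1/3$, which leaves slack.

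If the main theorem's runtime instead has the form $\exp(\widetilde O(m^{\alpha}))$ with some $\alpha<1$, then the same computation gives exponent $(\log R)^{1/\alpha-1-o(1)}$. The stated $1/3$ corresponds to $\alpha\le 3/4$. I would therefore verify, from the theorem's statement, that its exponent satisfies $\alpha\le3/4$, for example $\alpha=2/3$ up to logs.

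\textbf{Main obstacle.} The only real difficulty is bookkeeping. First, the dependence of $T$ on $\varepsilon$ must be fixed: it should be a polynomial or at most $\exp(\operatorname{poly}(1/\varepsilon)\cdot m^{\alpha})$, so that treating $\varepsilon$ as a constant is valid. Second, $m\ge 2$ and the absorption of the polylog factors must both hold for all large $R$ with a uniform constant in the $\Omega$. The deep work sits in the main theorem, namely simulating the agnostic conjunction learners plus boosting through gradient training of the ReLU network, and that is assumed here.
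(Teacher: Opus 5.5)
Your argument is correct and is essentially the paper's. You choose $m$ as a function of $R$, bound the runtime via Theorem~\ref{thm:main}, whose bound $\exp(O(m^{2/3}\log^2(m/\varepsilon)))$ has your assumed form with $c=2$ once $\varepsilon$ is fixed, and then invoke Proposition~\ref{prop:dc}. The only difference is the choice of $m$: the paper fixes $m=\lfloor(\log R)^{4/3}\rfloor$, so the runtime is only $R^{o(1)}$, the class does not depend on $\varepsilon$, and $2^{c_1m}=R^{\Omega((\log R)^{1/3})}$ follows directly, whereas you saturate the budget, obtain the stronger exponent $(\log R)^{1/2-o(1)}$, and then weaken it to $1/3$.
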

Our results resolve the open question raised in \citet{FKS} in the negative.

The paper is structured as follows. Section~\ref{sec:setting} defines the hypothesis class, the network, and the training algorithm. Section~\ref{sec:main} states our main theorem. Section~\ref{sec:overview} gives an overview of the proof of Theorem~\ref{thm:main}, which is proved in Section~\ref{sec:proof}. The appendix contains additional proofs and details.

\subsection{Related work}\label{sec:related}

\paragraph{Feature learning versus fixed representations.}
Many works study neural networks in the \emph{kernel regime}, where large width and suitable parametrization ensure the network is roughly linear in a fixed feature map throughout training (e.g., \citep{Jacot,Chizat, lee2019wide,mei2022generalization,barzilaigeneralization}).
Nevertheless, neural networks in practice are often trained with the goal of learning useful features. Computational advantages of feature learning have been shown for sparse parities and staircase functions
\citep{DanielyMalach,Malach,barak2022hidden, AbbeStaircase,AbbeBM22,AbbeBM23},
for Gaussian single- and multi-index models
\citep{Bietti,Damian,Ba,MousaviHosseini,DamianSmoothing,Dandi,LeeOSW},
and in comparisons with neural networks in the kernel regime or random-feature models
\citep{AllenZhuLi,WeiLLM,GhorbaniLazy,GhorbaniWhen,GhorbaniLinearized,YehudaiShamir}.
All of these guarantees assume a specific input distribution, whereas ours holds for every distribution on the input domain. 

\paragraph{Conjunction learning and the SQ separation.}
A classical approach to learning conjunctions is polynomial regression  \citep{KlivansServedio, KKMS}. Recent work gives faster distribution-free agnostic learners \citep{DKR,CPS}, which underpin the SQ separation of
\citet{Patel} discussed above. 
Our proof uses a polynomial approximation technique similar to that used by \citet{DKR}. The algorithms used in these works explicitly construct and fit polynomial
predictors. The challenge here is to obtain the required approximations from trained ReLU neurons.

\section{Setting}\label{sec:setting}

\paragraph{General notation.}
For an integer $m$ we let $[m]:=\{1,\ldots,m\}$, and let $\norm{\cdot}$ be the Euclidean norm. We write $\relu(t)=(t)_+:=\max\{t,0\}$ with the convention that $\relu'(0)=0$, and $\ind[\cdot]$ for indicators. The function $\sign$ acts coordinatewise, with $\sign(0)=0$.

\paragraph{Distribution-free learning.}
For a distribution $\Dcal$ on
$\Xcal$ and a target $h^\star \in \Hcal$, the error of a classifier $\hat h$ is $\err_{\Dcal,h^\star}(\hat h):=\Prob_{\vx\sim\Dcal}[\hat h(\vx)\ne h^\star(\vx)]$. 
An algorithm learns $\Hcal$ \emph{distribution-free to expected error $\varepsilon$} if for every distribution $\Dcal$ on $\Xcal$ and every
$h^\star\in\Hcal$, when given i.i.d.\ samples $(\vx,h^\star(\vx))$ with
$\vx\sim\Dcal$, it outputs a classifier $\hat h$ with
$\E[\err_{\Dcal,h^\star}(\hat h)]\le\varepsilon$.
The expectation is over the samples and the internal randomness of the algorithm.

\paragraph{The hypothesis class.}
Fix $m\ge2$ and let $d:=m^3$. The input domain is $\Xcal_m:=\{0,1\}^d$,
with $m$ blocks $\vx_i\in\{0,1\}^{m^2}$. We allow an arbitrary distribution
on $\Xcal_m$. Write $\overline{\vx}_i:=(\vx_i,\vone-\vx_i)$ for the bits of
block $i$ together with their negations. We call its coordinates
$a\in[2m^2]$ \emph{literals}; literal $a$ of block $i$ is \emph{active} on
$\vx$ if $\overline{x}_{i,a}=1$, so every encoded block has exactly $m^2$
active literals. A
\emph{conjunction} on block $i$ is a function
$\psi(\vx)=\prod_{a\in S}\overline{x}_{i,a}$, where $S\subseteq[2m^2]$ contains at most
one of $k$ and $m^2+k$ for each $k\in[m^2]$ (and $\psi\equiv1$ if $S=\emptyset$). We define the hypothesis class:
\begin{equation}\label{eq:class}
 \Hcal_m:=\left\{\vx\mapsto 2\cdot\ind\left[\bigvee_{i=1}^m\psi_i(\vx)\right]-1:
 \psi_i\text{ is a conjunction on block }i\right\}.
\end{equation}
It is known to have an exponential dimension-complexity lower bound, following the sign-rank construction of \citet{RS}, as stated below.
\begin{proposition}[\citealp{Patel}; \citealp{RS}]\label{prop:dc}
There is an absolute constant $c_1>0$ such that $\dc(\Hcal_m)\ge2^{c_1m}$ for
all $m\ge2$.
\end{proposition}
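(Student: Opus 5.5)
The plan is to reduce to the Razborov--Sherstov sign-rank lower bound for the function $F_m(\mathbf{A},\mathbf{B})=\bigwedge_{i=1}^m\bigvee_{j=1}^{m^2}(A_{ij}\wedge B_{ij})$, restricted to $\mathbf{A}\in\{0,1\}^{m\times m^2}$ and $\mathbf{B}$ ranging over the same set. \citet{RS} show that the $2^{m^3}\times 2^{m^3}$ sign matrix $M=[(-1)^{F_m(\mathbf{A},\mathbf{B})}]$ has sign-rank $2^{\Omega(m)}$. The dimension complexity is the minimum sign-rank of the matrix $[h(\vx)]_{h\in\Hcal,\vx\in\Xcal}$, up to the convention on zero entries. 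Every row of this matrix is a $\pm1$ function since $h(\vx)\in\{\pm1\}$, so no zero-entry subtlety arises. Passing to a submatrix and negating the whole matrix do not increase sign-rank, so it suffices to embed $M$ or $-M$ as a submatrix of the class matrix of $\Hcal_m$.

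For the embedding, I map the row index $\mathbf{A}$ to the hypothesis whose block-$i$ conjunction is $\psi_i(\vx)=\prod_{j:A_{ij}=1}(1-x_{i,j})$. This uses only negated literals, at most one per variable, so it is a legal conjunction. I map the column index $\mathbf{B}$ to the input $\vx$ with $\vx_i=\mathbf{B}_i$. Then $\psi_i(\vx)=1$ iff there is no $j$ with $A_{ij}=B_{ij}=1$, that is, iff $\neg\bigvee_j(A_{ij}\wedge B_{ij})$. Hence $\bigvee_i\psi_i(\vx)=\neg F_m(\mathbf{A},\mathbf{B})$, and the hypothesis evaluates to $2\ind[\neg F_m]-1=(-1)^{F_m}$ up to a global sign. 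This global sign depends only on the $\pm1$ convention, and either way sign-rank is unchanged.

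Combining the two paragraphs, any feature map $\varphi:\Xcal_m\to\R^p$ realizing $\Hcal_m$ gives a factorization $M_{\mathbf{A},\mathbf{B}}=\pm\sign\langle \vw_{\mathbf{A}},\varphi(\mathbf{B})\rangle$. This shows $p\ge \mathrm{signrank}(M)\ge 2^{c_1 m}$.

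The step I expect to be the main obstacle is confirming that the Razborov--Sherstov theorem is stated for exactly these dimensions: $m$ clauses, each an OR of $m^2$ ANDs, with sign-rank $2^{\Omega(m)}$. Their function is stated in exactly this form, so I expect only a direct citation to be needed. The reduction itself is elementary, and $m\ge2$ is handled by adjusting $c_1$.
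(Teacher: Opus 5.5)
Your proposal is correct and takes the same approach as the paper. The paper gives no argument beyond citing \citet{Patel} and \citet{RS}, and your embedding is the standard reduction behind that citation: rows $\mathbf{A}$ select negated literals, columns $\mathbf{B}$ are the inputs, and each hypothesis becomes exactly $(-1)^{F_m}$, with no global sign flip needed. The one detail worth stating explicitly is that adjusting $c_1$ for small $m$ requires $\dc(\Hcal_m)\ge2$ for each fixed $m$. This holds because $\Hcal_m$ contains both the constant $+1$ hypothesis and a nonconstant one.
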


\paragraph{The network.}
We use a two-layer ReLU network given by $m$ fully connected blocks of $J$ neurons each:
\begin{equation}\label{eq:network}
 F(\vx;\vtheta):=a_0+\sum_{i=1}^m\sum_{j=1}^J
 a_{ij}\relu(\ip{\vw_{ij}}{\overline{\vx}_i}+b_{ij}),
\end{equation}
with parameters $\vtheta=(\vw,\vb,\va)$ consisting of hidden weights
$\vw_{ij}\in\R^{2m^2}$, hidden biases $b_{ij}\in\R$, and output weights
$\va=(a_0,(a_{ij})_{i,j})$. The network has $P=1+mJ(2m^2+2)$ parameters. All parameters are initialized independently as follows:
\begin{equation}\label{eq:initialization}
 \begin{aligned}
 \vw_{ij}^{(0)}&\sim N(\vzero,\kappa^2I_{2m^2}),\qquad
 b_{ij}^{(0)}\sim\operatorname{Unif}[-2m^2,2m^2],\\
 a_{ij}^{(0)}&\sim\operatorname{Unif}\{\pm1/\sqrt{mJ}\},\qquad a_0^{(0)}:=0,
 \end{aligned}
\end{equation}
for some weight scale $\kappa>0$, and we write $\vtheta_0:=(\vw^{(0)},\vb^{(0)},\va^{(0)})$.

\paragraph{Training Algorithm.}
We train with the logistic loss $\ell(y,f):=\log(1+e^{-yf})$ in two stages. Stage~1 takes a single signSGD \citep{Bernstein} step on
the hidden layer using a minibatch of $B$ samples. In computing this step's loss, independent mean-zero Gaussian noise is added to each hidden bias for each sample. This noise is not retained in the updated parameters. Stage~2 fixes the hidden
layer and runs $T$ steps of projected SGD with batch size $1$ on the output
layer. The algorithm uses $n:=B+T$ samples in total. The full algorithm is described in Algorithm~\ref{alg:training}. Its hyperparameters are specified in
Table~\ref{tab:params} in the appendix and depend only on $m$
and $\varepsilon$, not on the distribution or target.
Two-stage training, including a single hidden-layer gradient step, is common in feature-learning theory \citep[e.g.,][]{Ba,Damian,MousaviHosseini},
as is noise injection
\citep[e.g.,][]{NEURIPS2019_62dad6e2,DamianSmoothing}.

\begin{algorithm}[t]
\floatconts{alg:training}
{\caption{Two-stage logistic training of the block ReLU network}}
{
\textbf{Input:} $m\ge2$, accuracy $\varepsilon\in(0,1/4)$, and access to i.i.d.\ labeled samples.

\textbf{Initialize:} Set the hyperparameters of Table~\ref{tab:params}
and draw $\vtheta_0$ by \eqref{eq:initialization}.

\textbf{Stage 1: one noisy signSGD step on the hidden layer.}
\begin{enumerate}[leftmargin=*,itemsep=2pt,topsep=3pt]
\item Draw $B$ i.i.d.\ labeled samples $(\vx^{(u)},y^{(u)})_{u=1}^B$.
Draw $\xi_{uij}\sim N(0,\sigma^2)$ i.i.d. for each $u\in [B], i\in[m], j\in[J]$. Set
$\vxi^{(u)}:=(\xi_{uij})_{i,j}$ and
$\bar\vxi^{(u)}:=(\vzero,\vxi^{(u)},\vzero)$, so only hidden biases are perturbed.
\item For $\widehat L_0(\vtheta):=\frac1B\sum_{u=1}^B
\ell\bigl(y^{(u)},F(\vx^{(u)};\vtheta+\bar\vxi^{(u)})\bigr)$, update all hidden parameters as:
\[
 \vw_{ij}^{(1)}:=\vw_{ij}^{(0)}-\sign(\nabla_{\vw_{ij}}\widehat L_0(\vtheta_0)),
 \quad
 b_{ij}^{(1)}:=b_{ij}^{(0)}-\sign(\partial_{b_{ij}}\widehat L_0(\vtheta_0)).
\]
\end{enumerate}
\textbf{Stage 2: projected SGD on the output layer (hidden layer fixed, no noise).}
\begin{enumerate}[start=3,leftmargin=*,itemsep=2pt,topsep=3pt]
\item Let $\va^{(1)}:=\va^{(0)}$, and write
$\vtheta_t:=(\vw^{(1)},\vb^{(1)},\va^{(t)})$ and
$F_t(\vx):=F(\vx;\vtheta_t)$. For $t=1,\ldots,T$, draw a
fresh labeled sample $(\vx'_t,y'_t)$ and update
\[
 \va^{(t+1)}:=\Pi\bigl(\va^{(t)}-\eta\nabla_{\va}\ell(y'_t,F_t(\vx'_t))\bigr),
\]
where $\Pi$ is Euclidean projection onto $\{\va : \norm{\va}\le R_{\mathrm{out}}\}$.
\item Return $\hat h(\vx):=2\cdot\ind\bigl[F(\vx ; \vtheta^{\mathrm{avg}})\ge0\bigr]-1$ where $\vtheta^{\mathrm{avg}}:=T^{-1}\sum_{t=1}^T\vtheta_t$.
\end{enumerate}
}
\end{algorithm}

\section{Main Results}\label{sec:main}

\begin{theorem}\label{thm:main}
For every $m\ge2$ and $\varepsilon\in(0,1/4)$,
Algorithm~\ref{alg:training}, with the hyperparameters of Table~\ref{tab:params} in the appendix, learns $\Hcal_m$ distribution-free to expected error at most $\varepsilon$, using $n$ samples and $P$ parameters (so $O(Pn)$ arithmetic operations), where
\begin{equation}\label{eq:resources}
 Pn\le\exp\bigl(O(m^{2/3}\log^2(m/\varepsilon))\bigr).
\end{equation}
\end{theorem}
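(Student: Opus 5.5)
The plan is to reduce learning to a convex problem over a fixed random feature map, produced by Stage~1, that is rich enough to contain a good approximator of every target in $\Hcal_m$. We then apply a standard online-to-batch guarantee for projected SGD in Stage~2. Concretely, we aim to show that with high probability over $\vtheta_0$ and the Stage~1 samples, there exist output weights $\va^\star$ with $\norm{\va^\star}\le R_{\mathrm{out}}$ such that $\sup_{\vx\in\Xcal_m}|F(\vx;\vw^{(1)},\vb^{(1)},\va^\star)-Cf^\star(\vx)|$ is small. Here $f^\star$ is a polynomial threshold representation of the target with margin, and the bound holds uniformly over $\vx$, hence for every distribution. Given this, the logistic loss of $\va^\star$ is at most $\varepsilon/4$ under $\Dcal$. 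The standard bound for projected SGD on convex $L$-Lipschitz losses then gives expected excess logistic loss $O(R_{\mathrm{out}}L/\sqrt T)$ for $\vtheta^{\mathrm{avg}}$. Finally, $\Prob[\hat h\neq h^\star]\le \E[\ell]/\log 2$ converts this to classification error at most $\varepsilon$.

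The representation step follows \citet{DKR,Patel}. For each block $i$, the conjunction $\psi_i$ of width up to $m^2$ is approximated by a univariate polynomial $q$ of degree $\widetilde O(m^{2/3})$ applied to a sum of about $m^{2/3}$ ``sub-conjunction'' counts. Amplified Chebyshev constructions yield a function of the form $\sum_i g_i(\ip{\vv_i}{\overline{\vx}_i})$ for suitable directions $\vv_i$. After combining OR over blocks, this sign-represents $h^\star$ with margin $2^{-\widetilde O(m^{2/3})}$ and coefficient size $2^{\widetilde O(m^{2/3})}$. The key is that the final representation is a sum of ridge functions of the literal-count along the target's literal set $S_i$ (and subsets thereof). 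A univariate function on the integers $\{0,\dots,2m^2\}$ is exactly an integer-knot combination of ReLUs $\relu(t-k)$. So it suffices that Stage~1 produces, in each block, neurons whose weight vectors are (up to scale) aligned with $\ind_{S_i}$ and whose biases hit the required integer thresholds.

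To show that Stage~1 produces such features, we compute the signSGD step. Because $\kappa$ is tiny, $\vw^{(0)}$ is negligible. The noisy-bias gradient $\partial_{\vw_{ij}}\widehat L_0$ is then, up to the sign of $a_{ij}$, a sum $\sum_u y^{(u)}\ell'(\cdot)\,\Phi_\sigma(b_{ij}+\ip{\vw_{ij}}{\overline{\vx}^{(u)}_i})\,\overline{\vx}^{(u)}_i$. The Gaussian noise smooths the ReLU derivative, so this is a genuinely nonzero correlation. Taking its sign gives $\vw^{(1)}_{ij}\in\{\pm1\}^{2m^2}$ approximately. Our approach is not to require distribution-specific alignment. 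Instead, since the target only depends on literals in $S_i$, each coordinate's sign is a random-looking $\pm1$ pattern determined by the data. We then show that the resulting features $\relu(\ip{\vw^{(1)}_{ij}}{\overline{\vx}_i}+b^{(1)}_{ij})$, over $J=2^{\widetilde O(m^{2/3})}$ neurons with random uniform biases, span the needed ridge functions.

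This Stage~1 analysis is the main obstacle, because distribution-freeness prevents any averaging argument over $\vx$. Our first attempt would be to argue that the small $\kappa$ makes the sign pattern essentially depend only on $\sign(a_{ij})$ and on empirical label-literal correlations. We would then show that exact Chebyshev-type approximation needs only directions of the form $\pm\ind_{S}$ plus a correction that is constant on $\Xcal_m$, using that every encoded block has exactly $m^2$ active literals, so $\ip{\vone}{\overline{\vx}_i}=m^2$. Failing that, we would fall back to a random-features argument on the subset structure, with $J$ chosen large enough that random sign patterns cover the needed subsets. Finally, we tally the resources: $J,B,T,R_{\mathrm{out}}^2\le\exp(O(m^{2/3}\log^2(m/\varepsilon)))$, which gives \eqref{eq:resources}.
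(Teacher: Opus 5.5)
\textbf{Gap: the uniform-approximation goal and the required Stage-1 alignment cannot hold distribution-free.} Your reduction rests on one unproved claim. You need Stage~1 to produce, in each block, neurons aligned with the target's literal set $\ind_{S_i}$, so that $h^\star$ is approximated uniformly on $\Xcal_m$. Once such directions exist, the representation is trivial: with $t=\ip{\ind_{S_i}}{\overline{\vx}_i}$, the conjunction $\psi_i=\ind[t=|S_i|]$ is a difference of two ReLUs, so the Chebyshev machinery plays no role. The whole content is therefore the alignment, and it cannot be obtained distribution-free.

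The sample carries no information about $h^\star$ off the support of $\Dcal$. For example, a point mass at an input $\vx_0$ with $h^\star(\vx_0)=-1$ produces the same batch for every target with this property. So no data-derived feature can single out $S_i$, and a uniform guarantee would have to come from target-independent features. That is essentially the fixed-feature regime that Proposition~\ref{prop:dc} rules out at width $2^{\widetilde O(m^{2/3})}$. In particular, the targets in $\Hcal_m$ cannot all be sign-represented on $\Xcal_m$ by polynomials of degree $\widetilde O(m^{2/3})$: the monomials of that degree would give $\dc(\Hcal_m)\le(m^3+1)^{\widetilde O(m^{2/3})}<2^{c_1m}$ for large $m$. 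Your fallback, random sign patterns covering the needed subsets, fails for the same reason, and there are $3^{m^2}$ conjunctions per block.

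The Chebyshev approximation of \citet{DKR} only controls $\psi_i$ where at most $r$ violating literals are active. It must therefore be paired with a sample-derived cutoff, and it is accurate only off a set of small $\Dcal$-mass. The right goal is an $L_1(\Dcal)$ (hinge-loss) approximation, not a sup-norm one.

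Your Stage-1 computation also misreads the algorithm. $\widehat L_0$ uses a single draw $\xi_{uij}$ per sample and neuron, so the ReLU derivative in the step is an indicator, not $\Phi_\sigma$. Under your smoothed picture with $\vw^{(0)}$ negligible, $\Phi_\sigma(\cdot)$ is the same for all samples. Every neuron of block $i$ would then get the same sign pattern up to $\sign(a_{ij})$, leaving one batch-wide correlation direction per block, which in general has nothing to do with $S_i$.

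\textbf{What the paper supplies instead.} The per-neuron noise is the mechanism. The noise pattern that makes a neuron fire exactly on a prescribed set $\mathcal U$ of at most $s=O(m^{2/3})$ positive samples has probability at least $B^{-4C_0^2s}$ (Lemma~\ref{lem:activation}). For such a neuron, the sign step leaves zero-gradient coordinates unchanged. It yields the counter $\bb_{ij}-\cnt{\vx_i}{\mathcal Z_i(\mathcal U)}+\ip{\vw^{(0)}_{ij}}{\overline{\vx}_i}$ (Lemma~\ref{lem:sign-step}), where $\mathcal Z_i(\mathcal U)$, the set of literals inactive on all of $\mathcal U$, contains $V_i$ when $\mathcal U$ satisfies $\psi_i$. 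The width $J$ pays for hitting every such subset.

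Positive-sample compression (Lemma~\ref{lem:compression}) then gives some $\mathcal U_i$ with $\Prob_{\vx}[\psi_i(\vx)=1,\ \cnt{\vx_i}{\mathcal Z_i}>r]\le\varepsilon_0/m$. This small exceptional set is what replaces uniformity.

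Because $\cnt{\vx_i}{\mathcal Z_i}$ does not determine $\psi_i$, the Chebyshev polynomial $Q_i$ in $\cnt{\vx_i}{V_i}$ (Lemma~\ref{lem:polynomial}) is still needed. It is realized by the small Gaussian initialization $\vw^{(0)}$ acting as random ReLU features (Lemma~\ref{lem:gaussian}). The cutoff $\ind[\cnt{\vx_i}{\mathcal Z_i}\le r]$ comes from combining neurons whose biases lie near a grid (Lemma~\ref{lem:grid}). Treating $\vw^{(0)}$ as negligible therefore discards exactly the component carrying the polynomial.

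Finally, Table~\ref{tab:params} fixes $R_{\mathrm{out}}=C_0\log(1/\varepsilon)$. A comparator with margin $2^{-\widetilde O(m^{2/3})}$ and coefficients of size $2^{\widetilde O(m^{2/3})}$ would not fit in the projection ball. The paper instead gets margin about $1$ and $\norm{\va^*}\le\sqrt2$ by spreading weights $2v_{ij}/J$ over many neurons. Your Stage-2 and loss-to-error steps are essentially the paper's.
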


The proof is in Section~\ref{sec:proof}. The theorem should be contrasted with Proposition~\ref{prop:dc}, which shows that $\dc\left(\Hcal_m\right) \geq 2^{\Omega(m)}$. 
Our bound matches that of the SQ algorithm of \citet{Patel}, up to log factors in the exponent. Using this theorem, we prove Corollary~\ref{cor:separation} presented in the introduction, which clarifies the distribution-free separation between gradient-trained neural networks and dimension complexity.

\begin{proofof}{Corollary~\ref{cor:separation}}
Set $m:=\lfloor(\log R)^{4/3}\rfloor$ and take $\Hcal:=\Hcal_m$
on the domain $\Xcal_m$, with input dimension $d=m^3$.
For fixed $\varepsilon$, Theorem~\ref{thm:main} gives
\[
 Pn\le\exp\bigl(O((\log R)^{8/9}
             \log^2((\log R)^{4/3}/\varepsilon))\bigr)=R^{o(1)}.
\]
Here $o(1)$ tends to zero as $R\to\infty$ for each fixed $\varepsilon$.
Since $P\ge d$, total training time, including reading the inputs, is
$O(Pn)=R^{o(1)}\le R$ for all sufficiently large $R$.
For the lower bound on the dimension complexity, Proposition~\ref{prop:dc} gives
$\dc(\Hcal)\ge2^{c_1m}=R^{\Omega((\log R)^{1/3})}$.
\end{proofof}

\section{Proof Overview}\label{sec:overview}

Fix a target $h^\star$ with conjunctions $\psi_1,\ldots,\psi_m$.
For a set of literals $\mathcal Z\subseteq[2m^2]$, write
$\cnt{\vx_i}{\mathcal Z}:=\sum_{a\in\mathcal Z}\overline{x}_{i,a}$ for the number
that are active in $\overline{\vx}_i$.
Let the \emph{violating literals} $V_i$ be the negations of the literals appearing in $\psi_i$.
Then $\cnt{\vx_i}{V_i}$ counts the unsatisfied literals of $\psi_i$, so
$\psi_i(\vx)=1$ iff $\cnt{\vx_i}{V_i}=0$.

The majority of the proof focuses on how linear combinations of neurons can approximate conjunctions after Stage~1, and in turn approximate $h^\star$ well. Stage~2 then solves a convex problem over the features produced by Stage~1, and its convergence follows from the classical projected-SGD bound of \citet[Eq.~(2.19)]{nemirovski2009robust}. 

Suppose that $f^\star_i$ is close to $1$ when $\psi_i$ is satisfied and
close to $0$ when it is violated. Then
\[
 -1+2\sum_{i=1}^mf^\star_i(\vx)\approx
 \begin{cases}
 -1 & \text{if no conjunction is satisfied,}\\
 \ge1 & \text{if some conjunction is satisfied,}
 \end{cases}
\]
so this function has the sign of $h^\star$, with margin about $1$.
It suffices to approximate these functions by linear combinations of
the trained neurons, allowing a small exceptional set of inputs.

To find such an $f^\star_i$, we leverage the observation of \citet[Lemma~3.2]{DKR}, that for any integer cutoff $0<r<m^2$ there is a polynomial $Q_i$ of degree $O(\sqrt r\log(m/\varepsilon))$ such that
\[
 Q_i(\overline{\vx}_i)\begin{cases}
 =1 & \text{if }\cnt{\vx_i}{V_i}=0,\\
 \approx0 & \text{if }1\le\cnt{\vx_i}{V_i}\le r.
 \end{cases}
\]
Now suppose that we find sets $\mathcal Z_i$ such that $V_i \subseteq \mathcal Z_i$ and $\Prob_{\vx}[\psi_i(\vx)=1,\ \cnt{\vx_i}{\mathcal Z_i}>r]$ is small for every block $i$. We may then define
\begin{equation}\label{eq:poly-approx}
 f_i^\star(\vx):=\ind[\cnt{\vx_i}{\mathcal Z_i}\le r]\,Q_i(\overline{\vx}_i),
\end{equation}
so that the choices of $Q_i$ and $\mathcal Z_i$ give $f_i^\star(\vx)\approx\ind[\psi_i(\vx)=1]$ whenever $\cnt{\vx_i}{\mathcal Z_i}\le r$.
We will show in Section~\ref{sec:targets} that we can indeed find such sets $\mathcal Z_i$.

The cutoff $r$ largely determines the runtime. A larger choice of $r$ makes finding a suitable set $\mathcal Z_i$ easier, but it also increases the polynomial degree of $Q_i$ and the width needed to approximate \eqref{eq:poly-approx}. 
The proof obtains a suitable $\mathcal Z_i$ from $O(m^2/r)$ batch samples.
In our construction, representing the function
$\ind[\cnt{\vx_i}{\mathcal Z_i}\le r]$ for a suitable $\mathcal Z_i$
incurs a width cost $B^{O(m^2/r)}$, where $B$ is the batch size in Stage~1.
The polynomial approximation requires width
exponential in its degree $O(\sqrt r\log(m/\varepsilon))$.
Thus, up to logarithmic factors, these costs contribute $m^2/r$ and $\sqrt r$ to $\log J$. 
Taking $r\approx m^{4/3}$ balances them and gives
$\log J=O(m^{2/3}\log^2(m/\varepsilon))$, which also bounds the logarithm of the total training time.

The proof is structured as follows:
\begin{itemize}
 \item Section~\ref{sec:counters} relates the neurons trained after Stage~1 to literal counts.
 \item Section~\ref{sec:targets} shows that, under any input distribution, the batch contains subsets defining suitable sets $\mathcal Z_i$, and that the resulting products approximate the target DNF.
 \item Section~\ref{sec:representation} shows that linear combinations of these neurons can approximate $f_i^\star$ defined in \eqref{eq:poly-approx}.
 \item Section~\ref{sec:output} combines these approximations and analyzes
 the second-stage optimization to complete the learning guarantee.
\end{itemize}

\section{Proof of Theorem~\ref{thm:main}}\label{sec:proof}

Fix a target $h^\star\in\Hcal_m$, with conjunctions $\psi_1,\ldots,\psi_m$ and violating
literals $V_1,\ldots,V_m$ as in Section~\ref{sec:overview}, so that
$h^\star(\vx)=2\cdot \ind[\bigvee_i\psi_i(\vx)]-1$, and fix a distribution $\Dcal$ on
$\Xcal_m$.
Unless explicitly subscripted, probabilities $\Prob$ and expectations $\E$ are over all
remaining randomness. We write $\Prob_{\vx}$ and $\E_{\vx}$ for probability
and expectation over a fresh input $\vx\sim\Dcal$, holding the algorithm's
random quantities fixed. The proof
uses an accuracy budget $\varepsilon_0$ and a failure-probability budget
$\delta$,
\begin{equation}\label{eq:proof-budgets}
 \varepsilon_0:=\frac{\varepsilon}{1000\log(1/\varepsilon)},
 \qquad \delta:=\frac{\varepsilon}{1000}.
\end{equation}
For the analysis, define the hinge loss $\ell_{\mathrm{hinge}}(y,f):=(1-yf)_+$.
We use the cutoff
$r:=\min\{\lfloor m^2/2\rfloor,\lfloor m^{4/3}\rfloor\}$ and the subset-size bound
$s:=\lfloor2m^2/(r+1)\rfloor$.

\subsection{Stage 1 turns some neurons into counters}\label{sec:counters}

We first describe the neurons that will be used to approximate conjunctions.
\begin{definition}[Useful neuron]\label{def:useful}
Let $\mathcal U\subseteq[B]$ be nonempty, with $|\mathcal U|\le s$ and $y^{(u)}=+1$ for all
$u\in \mathcal U$. Let $E_{ij}(\mathcal U)$ be the event that the noise of neuron $(i,j)$
satisfies
\[
 \xi_{uij}>t_+\ \text{ for every }u\in \mathcal U
 \qquad\text{and}\qquad
 \xi_{uij}\le t_-\ \text{ for every }u\in[B]\setminus \mathcal U,
\]
where $t_+:=m^2+2+\frac18$ and $t_-:=m^2-r-1-\frac18$. Write
$\bb_{ij}:=b_{ij}^{(0)}+m^2+1$ for the shifted initial bias, which is uniform on
$[1-m^2,3m^2+1]$. Neuron $(i,j)$ is
\emph{$\mathcal U$-useful} if $E_{ij}(\mathcal U)$ holds, $\bb_{ij}\in[-1,r+2]$,
$\norm{\vw_{ij}^{(0)}}\le1/(16m)$, and
$a_{ij}^{(0)}>0$.
\end{definition}
\begin{lemma}[Activation probability]\label{lem:activation}
Given any Stage-1 batch, fix a block $i$ and a subset $\mathcal U$
as in Definition~\ref{def:useful}. Conditional on the batch, the
events $E_{ij}(\mathcal U)$, $j\in[J]$, are mutually independent
and independent of the initialization, with common probability
$p(\mathcal U)$ satisfying
\begin{equation}\label{eq:noise-probability}
 p(\mathcal U)\ge B^{-4C_0^2s},
\end{equation}
where $C_0$ is the constant in Table~\ref{tab:params}.
Moreover, each neuron $(i,j)$ is $\mathcal U$-useful with conditional
probability at least
\[
 \frac{r+3}{16m^2}B^{-4C_0^2s}\ge\exp(-O(m^{2/3}\log(m/\varepsilon))).
\]
\end{lemma}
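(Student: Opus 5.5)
The event $E_{ij}(\mathcal U)$ depends only on the noise variables $(\xi_{uij})_{u\in[B]}$ of neuron $(i,j)$. These are i.i.d.\ $N(0,\sigma^2)$, independent across $j$, independent of the initialization, and drawn independently of the batch. Independence across $j$ and from $\vtheta_0$ is therefore immediate. By the same token, conditional on the batch,
\[
p(\mathcal U)=\Prob[\xi>t_+]^{|\mathcal U|}\,\Prob[\xi\le t_-]^{B-|\mathcal U|},
\qquad \xi\sim N(0,\sigma^2),
\]
and this is the same for every $j$. The plan is to bound the two factors separately using the values of $\sigma$ and $C_0$ from Table~\ref{tab:params}. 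I expect the table to set $\sigma$ so that $t_+/\sigma$ is about $C_0\sqrt{\log B}$ while $t_-/\sigma$ is nearly as large.

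For the first factor, I will use a standard Gaussian lower tail bound, e.g.\ $\Prob[\xi>t]\ge \frac{\sigma}{\sqrt{2\pi}\,t}\bigl(1-\sigma^2/t^2\bigr)e^{-t^2/(2\sigma^2)}$, or more crudely $\Prob[\xi>t]\ge e^{-t^2/\sigma^2}$ once $t/\sigma$ exceeds a constant. With $t_+/\sigma\le C_0\sqrt{2\log B}$ this gives $\Prob[\xi>t_+]\ge B^{-2C_0^2}$, up to the polynomial prefactor, which is absorbed into the exponent. Raising to the power $|\mathcal U|\le s$ yields at least $B^{-2C_0^2 s}$.

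For the second factor, I need $(1-\Prob[\xi>t_-])^{B}\ge B^{-2C_0^2 s}$, or even just $\ge 1/2$. This is the main obstacle. The gap $t_+-t_-=r+3+\tfrac14$ is only of order $r$, while $t_\pm$ are of order $m^2$. So $t_-/\sigma$ is not much smaller than $t_+/\sigma$, and we need $B\cdot\Prob[\xi>t_-]\lesssim 1$ while $\Prob[\xi>t_+]$ is still only polynomially small in $B$. I would handle this through the ratio $t_-^2/t_+^2\approx 1-2(r+3)/m^2$. This ratio gives $\Prob[\xi>t_-]\le B^{-C_0^2(1-2(r+3)/m^2)}$, and I expect the table to choose $C_0$ large enough that $C_0^2(1-2(r+3)/m^2)\ge 2$, which is possible since $r\le m^2/2$. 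If that fails, I would instead use $1-x\ge e^{-2x}$ to bound the product by $\exp(-2B\,\Prob[\xi>t_-])$, and compare this with $B^{-2C_0^2 s}$ via $s\approx 2m^2/r$. The point is that the deficit in the exponent, $C_0^2\cdot O(r/m^2)\cdot\log B$, multiplied by $s$ is compensated. Either route gives $p(\mathcal U)\ge B^{-4C_0^2 s}$.

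For the usefulness probability, the remaining conditions concern independent parts of the initialization, so their probabilities multiply. First, $\bb_{ij}$ is uniform on an interval of length $4m^2$, so $\Prob[\bb_{ij}\in[-1,r+2]]=(r+3)/(4m^2)$. Second, $\Prob[a_{ij}^{(0)}>0]=1/2$. Third, I need $\Prob[\norm{\vw_{ij}^{(0)}}\le 1/(16m)]\ge 1/2$. This follows from Markov's inequality on $\norm{\vw}^2$, with mean $2m^2\kappa^2$, provided the table's $\kappa$ satisfies $\kappa\le 1/(64m^2)$ or so. Multiplying these gives at least $\frac{r+3}{16m^2}B^{-4C_0^2 s}$. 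Finally, $s=O(m^{2/3})$ since $r\approx m^{4/3}$, and $\log B=O(\log(m/\varepsilon))$ for the table's polynomial batch size. Hence the bound is $\exp(-O(m^{2/3}\log(m/\varepsilon)))$, with the factor $(r+3)/(16m^2)\ge m^{-2}/16$ absorbed.
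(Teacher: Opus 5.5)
Your outline matches the paper's proof step for step. It uses the product formula for $p(\mathcal U)$ from the per-neuron i.i.d.\ noise, a Gaussian lower tail at $t_+$, and an upper tail at $t_-$ to get $\Prob[\xi\le t_-]^{B-|\mathcal U|}\ge\frac12$. It then multiplies by the independent factors $\frac12$, $\frac{r+3}{4m^2}$ and a Markov bound on $\norm{\vw_{ij}^{(0)}}^2$, and finishes with $s=O(m^{2/3})$ and $\log B=O(\log(m/\varepsilon))$.

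The one step that does not work as written is the $t_-$ bound you call the main obstacle. Your linearized exponent $C_0^2(1-2(r+3)/m^2)$ is negative when $r$ is close to $m^2/2$. This happens for $m=2,3,4$, where $r=2,4,6$, and then no $C_0$ makes the exponent at least $2$. The hypothesis $r\le m^2/2$ only gives $1-2(r+3)/m^2\ge-6/m^2$, so ``possible since $r\le m^2/2$'' does not follow. Your fallback through $1-x\ge e^{-2x}$ does not get around this. To beat $B^{-2C_0^2s}$ it still needs $B\,\Prob[\xi>t_-]\le C_0^2s\log B$, which is essentially the same tail requirement.

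The fix is that nothing needs to be balanced. What matters is that $t_+/t_-$ is bounded, not that it is close to $1$. Since $r\le m^2/2$ and $m\ge2$, you have $t_-\ge m^2/8$ and $t_+\le 2m^2$, so the exact ratio satisfies $(t_-/t_+)^2\ge 1/256$.

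With the table's choice $\sigma=m^2/(C_0\sqrt{\log B})$, the Chernoff bound gives $\Prob[\xi>t_-]\le B^{-C_0^2/128}\le B^{-2}$ for $C_0\ge16$. Hence $(1-B^{-2})^B\ge\frac12$. Meanwhile $t_+/\sigma\le 2C_0\sqrt{\log B}$, and Gordon's (Mills-ratio) lower bound gives $\Prob[\xi>t_+]\ge B^{-2C_0^2}/(4C_0\sqrt{2\pi\log B})\ge B^{-3C_0^2}$. Therefore $p(\mathcal U)\ge\frac12 B^{-3C_0^2|\mathcal U|}\ge B^{-4C_0^2s}$, which is exactly the paper's argument. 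With this replacement, the rest of your proposal, including the usefulness product and the asymptotics, goes through unchanged.
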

The proof (Appendix~\ref{app:deferred}) is a Gaussian tail calculation.
The width $J$ is chosen large enough to compensate for the rarity of
useful neurons (shown later in Proposition~\ref{prop:features}).

The next lemma shows how the features produced by useful neurons after Stage~1 depend on the input mainly through the number of active literals in a set $\mathcal Z_i$.
\begin{lemma}[Useful neurons give counters]\label{lem:sign-step}
Let $\mathcal U$ be as in Definition~\ref{def:useful}, and let
\[
 \mathcal Z_i(\mathcal U):=\{a\in[2m^2]:\overline{x}_{i,a}^{(u)}=0\ \forall u\in\mathcal U\},
 \qquad \mathcal Z_i(\emptyset):=[2m^2]
\]
be the set of literals of block $i$ that are inactive in every 
$u\in\mathcal U$.
If a neuron $(i,j)$ is
$\mathcal U$-useful, then
\begin{equation}\label{eq:useful-affine}
 \forall\vx\in\Xcal_m ~,\qquad \ip{\vw_{ij}^{(1)}}{\overline{\vx}_i}+b_{ij}^{(1)}
 =\bb_{ij}-\cnt{\vx_i}{\mathcal Z_i(\mathcal U)}
 +\ip{\vw_{ij}^{(0)}}{\overline{\vx}_i}.
\end{equation}
\end{lemma}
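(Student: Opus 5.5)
The plan is to compute the Stage-1 signSGD update of a $\mathcal U$-useful neuron $(i,j)$ explicitly. The steps are: determine exactly which batch samples activate the neuron under the injected noise, read off the signs of the resulting gradient coordinates, and substitute into the affine form using the fact that every encoded block has exactly $m^2$ active literals.

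\emph{Gradient formula.} The gradient of $\widehat L_0$ at $\vtheta_0$ with respect to the hidden parameters of neuron $(i,j)$ is
\[
 \nabla_{\vw_{ij}}\widehat L_0(\vtheta_0)=-\frac{a_{ij}^{(0)}}{B}\sum_{u=1}^B y^{(u)}\,\sigma\bigl(-y^{(u)}F_u\bigr)\,\ind\bigl[\ip{\vw_{ij}^{(0)}}{\overline{\vx}_i^{(u)}}+b_{ij}^{(0)}+\xi_{uij}>0\bigr]\,\overline{\vx}_i^{(u)}.
\]
Here $F_u:=F(\vx^{(u)};\vtheta_0+\bar\vxi^{(u)})$ and $\sigma(z)=1/(1+e^{-z})$. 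The bias derivative is the same expression with $\overline{\vx}_i^{(u)}$ replaced by $1$. We use the convention $\relu'(0)=0$. The only property of $\sigma(-y^{(u)}F_u)$ we need is that it is strictly positive, whatever the network output is.

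\emph{Activation pattern.} Since $\overline{\vx}_i$ has exactly $m^2$ ones, Cauchy--Schwarz gives $|\ip{\vw_{ij}^{(0)}}{\overline{\vx}_i}|\le\norm{\vw_{ij}^{(0)}}\,m\le 1/16$. The condition $\bb_{ij}\in[-1,r+2]$ means $b_{ij}^{(0)}\in[-m^2-2,\,r+1-m^2]$.
\begin{itemize}
 \item For $u\in\mathcal U$, the event $E_{ij}(\mathcal U)$ gives $\xi_{uij}>t_+$. The pre-activation then exceeds $-m^2-2+m^2+2+\tfrac18-\tfrac1{16}>0$, so the neuron is active.
 \item For $u\notin\mathcal U$, we have $\xi_{uij}\le t_-$. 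The pre-activation is then at most $r+1-m^2+m^2-r-1-\tfrac18+\tfrac1{16}<0$, so the neuron is inactive.
\end{itemize}
Hence only $u\in\mathcal U$ contribute, and for these $y^{(u)}=+1$. This gives
\[
 \nabla_{\vw_{ij}}\widehat L_0=-\frac{a_{ij}^{(0)}}{B}\sum_{u\in\mathcal U}\sigma(-F_u)\,\overline{\vx}_i^{(u)},
\]
and the bias derivative equals $-\frac{a_{ij}^{(0)}}{B}\sum_{u\in\mathcal U}\sigma(-F_u)$.

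\emph{Sign pattern and substitution.} Using $a_{ij}^{(0)}>0$ and $\mathcal U\neq\emptyset$, the bias derivative is strictly negative, so $b_{ij}^{(1)}=b_{ij}^{(0)}+1$. For the weight coordinates:
\begin{itemize}
 \item Coordinate $a$ is strictly negative iff $\overline{x}_{i,a}^{(u)}=1$ for some $u\in\mathcal U$, i.e.\ iff $a\notin\mathcal Z_i(\mathcal U)$.
 \item Otherwise coordinate $a$ is exactly zero, and $\sign(0)=0$.
\end{itemize}
Thus $\vw_{ij}^{(1)}=\vw_{ij}^{(0)}+\ind_{[2m^2]\setminus\mathcal Z_i(\mathcal U)}$. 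Substituting, for any $\vx$,
\[
 \ip{\vw_{ij}^{(1)}}{\overline{\vx}_i}+b_{ij}^{(1)}
 =\ip{\vw_{ij}^{(0)}}{\overline{\vx}_i}+\cnt{\vx_i}{[2m^2]\setminus\mathcal Z_i(\mathcal U)}+b_{ij}^{(0)}+1 .
\]
Since $\cnt{\vx_i}{[2m^2]}=m^2$, the middle count equals $m^2-\cnt{\vx_i}{\mathcal Z_i(\mathcal U)}$. Using $\bb_{ij}=b_{ij}^{(0)}+m^2+1$, this yields \eqref{eq:useful-affine}.

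The main point to check carefully is the activation step. The thresholds $t_\pm$, the bias window and the weight-norm bound have to combine with strict inequalities, so that no sample sits exactly at the ReLU kink and samples outside $\mathcal U$ contribute nothing. With the margins $\tfrac18$ versus $\tfrac1{16}$ this works out. Everything else is sign bookkeeping, which relies on the zero coordinates of the gradient being mapped to $0$ by $\sign$.
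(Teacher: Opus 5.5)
Your proposal is correct and follows essentially the same route as the paper. The margins $\tfrac18$ versus $\tfrac1{16}$ force the activation pattern to be exactly $\mathcal U$, the strictly positive logistic factor together with $a_{ij}^{(0)}>0$ fixes the gradient signs (with $\sign(0)=0$ on $\mathcal Z_i(\mathcal U)$), and the identity $\cnt{\vx_i}{[2m^2]}=m^2$ yields the final formula. The only cosmetic issue is that you reuse $\sigma$ for the logistic sigmoid, whereas the paper already uses $\sigma$ for the Stage-1 noise standard deviation.
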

\begin{proof}
Assume that neuron $(i,j)$ is $\mathcal U$-useful, and write
$\mathcal Z=\mathcal Z_i(\mathcal U)$.
On sample $u \in [B]$,
\begin{align}\label{eq:preactivation}
 \ip{\vw_{ij}^{(0)}}{\overline{\vx}_i^{(u)}}
 +b_{ij}^{(0)}+\xi_{uij}
 =\bb_{ij}-m^2-1
  +\ip{\vw_{ij}^{(0)}}{\overline{\vx}_i^{(u)}}+\xi_{uij}.
\end{align}
Usefulness gives $-1\le\bb_{ij}\le r+2$ and
$|\ip{\vw_{ij}^{(0)}}{\overline{\vx}_i^{(u)}}|
\le m\norm{\vw_{ij}^{(0)}}\le1/16$.
Together with the noise inequalities in
Definition~\ref{def:useful}, these imply that \eqref{eq:preactivation} is $>1/16$ for $u\in\mathcal U$ and $\le-1/16$ for $u\notin\mathcal U$.
Thus, such a neuron's ReLU derivative is $\ind[u\in\mathcal U]$.
Since $a_{ij}^{(0)}=1/\sqrt{mJ}$ and
$y^{(u)}=+1$ on $\mathcal U$, 
\[
 \partial_{w_{ij,a}}\widehat L_0(\vtheta_0)
 =-\frac1{B\sqrt{mJ}}\sum_{u\in \mathcal U}
 \frac{\overline{x}_{i,a}^{(u)}}{1+e^{F(\vx^{(u)};\vtheta_0+\bar\vxi^{(u)})}}.
\]
The denominator is positive, so this derivative is negative
exactly when $a\notin\mathcal Z$, and zero otherwise.
The bias derivative has the same formula with numerator $1$,
so it is negative because $\mathcal U\ne\emptyset$.
Consequently, signSGD gives
\[
 w_{ij,a}^{(1)}
 =w_{ij,a}^{(0)}+\ind[a\notin\mathcal Z],
 \qquad b_{ij}^{(1)}=b_{ij}^{(0)}+1.
\]
Finally, $\sum_{a\notin\mathcal Z}\overline{x}_{i,a}
=m^2-\cnt{\vx_i}{\mathcal Z}$, since every encoded block has
exactly $m^2$ active literals. Hence
\[
 \begin{aligned}
 \ip{\vw_{ij}^{(1)}}{\overline{\vx}_i}+b_{ij}^{(1)}
 &=\ip{\vw_{ij}^{(0)}}{\overline{\vx}_i}
   +b_{ij}^{(0)}+1+m^2-\cnt{\vx_i}{\mathcal Z}
   =\bb_{ij}-\cnt{\vx_i}{\mathcal Z}
   +\ip{\vw_{ij}^{(0)}}{\overline{\vx}_i}.
 \end{aligned}
\]
\end{proof}
\subsection{Approximating each conjunction}\label{sec:targets}
\begin{definition}[Good set of literals]\label{def:good}
A set $\mathcal Z\subseteq[2m^2]$ is \emph{good for block $i$} if
\begin{equation}\label{eq:good-set}
 V_i\subseteq\mathcal Z
 \qquad\text{and}\qquad
 \Prob_{\vx}\bigl[\psi_i(\vx)=1,\ \cnt{\vx_i}{\mathcal Z}>r\bigr]
 \le\frac{\varepsilon_0}{m}.
\end{equation}
\end{definition}

\begin{lemma}[Positive sample compression]\label{lem:compression}
With probability at least $1-\delta$ over the Stage-1 batch, for every
$i\in[m]$ there is a subset $\mathcal U_i\subseteq[B]$ with
$|\mathcal U_i|\le s$ and $\psi_i(\vx^{(u)})=1$ for every $u\in\mathcal U_i$,
such that $\mathcal Z_i(\mathcal U_i)$ is good for block $i$.
\end{lemma}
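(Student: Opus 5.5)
Fix a block $i$; a union bound over the $m$ blocks will finish, so it suffices to succeed for block $i$ with probability at least $1-\delta/m$. The first observation is that $V_i\subseteq\mathcal Z_i(\mathcal U)$ holds automatically whenever every $u\in\mathcal U$ satisfies $\psi_i(\vx^{(u)})=1$: a satisfied conjunction has all its literals active, so all literals in $V_i$ (their negations) are inactive. Hence only the probability condition in \eqref{eq:good-set} is at stake. Also, $\mathcal Z\mapsto\cnt{\vx_i}{\mathcal Z}$ is monotone, and adding a positive sample $u$ to $\mathcal U$ shrinks $\mathcal Z$ by removing the literals active in $\vx^{(u)}$.

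The plan is a greedy online compression argument. Let $\mathcal D_i^+$ be $\Dcal$ conditioned on $\psi_i=1$, with $p_i:=\Prob_\vx[\psi_i(\vx)=1]$. If $p_i\le\varepsilon_0/m$, then $\mathcal U=\emptyset$ works trivially. Otherwise, scan the batch in order, maintaining $\mathcal U$ (initially empty) and $\mathcal Z=\mathcal Z_i(\mathcal U)$. When a sample $u$ with $\psi_i(\vx^{(u)})=1$ and $\cnt{\vx_i^{(u)}}{\mathcal Z}>r$ arrives, add it to $\mathcal U$. Each addition removes more than $r$ literals from $\mathcal Z$, and $|\mathcal Z|\le 2m^2$. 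So at most $\lfloor 2m^2/(r+1)\rfloor=s$ additions occur, giving $|\mathcal U|\le s$ at all times.

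Split the batch into $s+1$ consecutive segments of length $L:=\lfloor B/(s+1)\rfloor$. If the current $\mathcal Z$ is bad, i.e.\ $\Prob_\vx[\psi_i=1,\cnt{\vx_i}{\mathcal Z}>r]>\varepsilon_0/m$, then each fresh sample triggers an addition with probability greater than $\varepsilon_0/m$. Hence a segment that starts with a bad $\mathcal Z$ and sees no addition has probability at most $(1-\varepsilon_0/m)^{L}\le e^{-L\varepsilon_0/m}$. The current $\mathcal Z$ depends only on earlier samples, so conditioning on the past is legitimate.

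Since at most $s$ additions happen, if the final $\mathcal Z$ is bad, then some segment began with a bad $\mathcal Z$ (bad sets only stay bad or become good as $\mathcal Z$ shrinks; we simply stop adding once good). That segment must also have seen no addition, because otherwise there would be $s+1$ additions. Formally, stop adding once $\mathcal Z$ is good, or argue via pigeonhole: among $s+1$ segments, at least one contains no addition, and at its start $\mathcal Z$ was either already good (and goodness is preserved as $\mathcal Z$ shrinks, by monotonicity) or bad with no addition. A union bound over segments bounds the failure probability by $(s+1)e^{-L\varepsilon_0/m}$. This is at most $\delta/m$ once $B\ge C(s+1)\tfrac{m}{\varepsilon_0}\log\tfrac{m(s+1)}{\delta}$, which I expect Table~\ref{tab:params} to guarantee.

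The main obstacle is making the adaptivity rigorous: the event ``no addition while bad'' must be handled with a stopping-time or martingale argument rather than independence. Using per-sample conditional probabilities given the filtration of earlier samples handles this cleanly. The only other point to check is that the required $B$ is polynomial in $m/\varepsilon$, consistent with the budget \eqref{eq:resources}.
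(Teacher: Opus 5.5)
Your proof is correct. It shares the paper's combinatorial core: each greedy addition of a witness (a sample with $\psi_i=1$ and $\cnt{\vx_i}{\mathcal Z}>r$) deletes at least $r+1$ literals, so at most $s$ additions occur, and $V_i\subseteq\mathcal Z_i(\mathcal U)$ is automatic. The probabilistic step, however, is genuinely different.

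The paper runs the greedy offline, in any order, until no batch sample is a witness. It then takes a union bound over \emph{all} index sets $A\subseteq[B]$ with $|A|\le s$. Conditioning on $(\vx^{(u)})_{u\in A}$ fixes $\mathcal Z_i(A)$, and if that set is bad, the remaining $B-|A|$ samples all miss the witness region with probability at most $(1-\varepsilon_0/m)^{B-|A|}$. The total failure bound is $m(s+1)B^s e^{-\varepsilon_0(B-s)/m}$.

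You run the greedy online and replace that union bound by pigeonhole over $s+1$ segments plus monotonicity: a bad final set forces every intermediate set to be bad, and some segment contains no addition. You therefore pay a factor $s+1$ instead of $\sum_{j\le s}\binom Bj$, at the cost of having only $L=\lfloor B/(s+1)\rfloor$ samples in the exponent. The two requirements are $L\ge\frac{m}{\varepsilon_0}\log\frac{m(s+1)}{\delta}$ for you and $B\ge\frac{2m}{\varepsilon_0}\bigl(s\log B+\log\frac{m(s+1)}{\delta}\bigr)$ for the paper. They are of the same order, and yours has the advantage of being explicit rather than implicit in $B$. Both hold for $B=\lceil C_0(m/\varepsilon)^4\rceil$. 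Note, though, that Lemma~\ref{lem:hyperparameters} verifies only the paper's inequality, so you must check yours; it needs only $B$ of order $m^{5/3}\varepsilon^{-1}\log^2(m/\varepsilon)$.

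What the paper's route buys is uniformity. On its event, every $A$ of at most $s$ positive samples with no witness in the batch yields a good set, so any greedy order succeeds. Your guarantee covers one order-dependent procedure, which is enough for the existence claim.

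Two simplifications are available. The case split on $p_i$ and the ``stop once good'' rule are unnecessary, since monotonicity covers both. No stopping-time or martingale argument is needed either: conditioning on the samples before segment $k$ fixes $\mathcal Z^{(k)}$, and ``no addition in segment $k$'' is exactly the event that none of its $L$ fresh samples is a witness for $\mathcal Z^{(k)}$.
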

The proof is given in Appendix~\ref{app:compression}. The following lemma is inspired by \citet{DKR}.

\begin{lemma}[Polynomial approximation]\label{lem:polynomial}
Let $k:=\lceil10m^{2/3}\log(m/\varepsilon)\rceil$.
For each $i\in[m]$, there is a
polynomial $Q_i:\{0,1\}^{2m^2}\to\R$ of degree at most $k$, depending only
on $\cnt{\vz}{V_i}:=\sum_{a\in V_i}z_a$, with $|Q_i|\le(4(2m^2+1))^{k+1}$ and
\begin{equation}\label{eq:Q-props}
 Q_i(\vz)=1\ \text{ if }\cnt{\vz}{V_i}=0,
 \qquad
 |Q_i(\vz)|\le\frac{\varepsilon_0}{m}\ \text{ if }1\le\cnt{\vz}{V_i}\le r.
\end{equation}
\end{lemma}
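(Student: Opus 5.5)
The plan is to build $Q_i$ as a univariate polynomial $q$ composed with the linear count $\cnt{\vz}{V_i}=\sum_{a\in V_i}z_a$. Then $Q_i$ is a multilinear-reducible polynomial in $\vz$ of degree at most $\deg q$, and it depends only on the count. Write $N:=|V_i|\le m^2$. We need a univariate $q$ of degree at most $k$ with $q(0)=1$, $|q(t)|\le\varepsilon_0/m$ for all integers $t\in[1,r]$, and $|q(t)|\le(4(2m^2+1))^{k+1}$ for all integers $t\in[0,N]$. (If $r>N$, the second condition simply ranges over $t\in[1,\min(r,N)]$.)

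\textbf{Step 1 (shifted Chebyshev).} Following \citet{DKR}, take
\[
q(t):=\frac{T_{k'}\bigl(\tfrac{r+1-2t}{r-1}\bigr)}{T_{k'}\bigl(\tfrac{r+1}{r-1}\bigr)},
\]
where $T_{k'}$ is the Chebyshev polynomial and $k'\le k$ is to be chosen. The map $t\mapsto\tfrac{r+1-2t}{r-1}$ sends $[1,r]$ onto $[-1,1]$, so there $|T_{k'}|\le1$. It sends $t=0$ to $1+\tfrac{2}{r-1}$. Hence $q(0)=1$, and on $[1,r]$ we get $|q|\le1/T_{k'}(1+2/(r-1))$.

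\textbf{Step 2 (degree bound).} We use $T_{k'}(1+\gamma)\ge\tfrac12 e^{k'\sqrt{\gamma}}$, which follows from $T_{k'}(\cosh\theta)=\cosh(k'\theta)$ with $\cosh\theta=1+\gamma$, $\theta\ge\sqrt{\gamma}$ for small $\gamma$ (roughly $\theta\approx\sqrt{2\gamma}$). With $\gamma=2/(r-1)$ it suffices that $k'\gtrsim\sqrt{r}\log(2m/\varepsilon_0)$. Since $r\le m^{4/3}$ and $\log(m/\varepsilon_0)=O(\log(m/\varepsilon))$, the choice $k'=\lceil\sqrt{r}\,\log(2m/\varepsilon_0)\rceil$ is at most $k=\lceil10m^{2/3}\log(m/\varepsilon)\rceil$. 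This needs a short check: $\log(1000m\log(1/\varepsilon)/\varepsilon)\le 10\log(m/\varepsilon)$ for $m\ge2$ and $\varepsilon<1/4$. The small cases $r\le 1$, where the map degenerates, must be handled separately. For $r=1$, take $q(t)=1-t$ of degree 1; for $r=0$, take $q\equiv1$.

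\textbf{Step 3 (global magnitude).} For integers $t\in(r,N]$, the argument $\tfrac{r+1-2t}{r-1}$ lies in $[-(2N)/(r-1),-1]$. This has absolute value at most $2m^2+1$. On that range $|T_{k'}(x)|\le(2|x|)^{k'}$, and the denominator is at least $1$, giving $|q|\le(2(2m^2+1))^{k}$. This is comfortably within $(4(2m^2+1))^{k+1}$. A cruder alternative that avoids Chebyshev growth estimates is to bound the coefficients of $T_{k'}$ by $3^{k'}$ and expand.

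The main obstacle is Step 2's constant bookkeeping: the numerical inequality showing that the needed Chebyshev degree fits under $k$ given the definitions of $\varepsilon_0$ and $r$. The remaining steps are routine.
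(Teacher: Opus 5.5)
Your proposal is correct and is essentially the paper's proof. Both use the same normalized shifted Chebyshev polynomial $\Cheb_k\bigl((r+1-2\cnt{\vz}{V_i})/(r-1)\bigr)/\Cheb_k\bigl((r+1)/(r-1)\bigr)$, lower-bound the denominator via $\Cheb_k(\cosh\theta)=\cosh(k\theta)$, and get the global magnitude bound from $|\Cheb_k(x)|\le(2|x|)^k$ with a denominator at least $1$. The differences are cosmetic: you allow degree $k'\le k$, and you use $\operatorname{arcosh}(1+\gamma)\ge\sqrt\gamma$ (which does hold on the relevant range $\gamma=2/(r-1)\le2$) where the paper uses $\log\frac{\sqrt r+1}{\sqrt r-1}\ge2/\sqrt r$; your separate cases $r\le1$ never arise since $r\ge2$ for all $m\ge2$, and the constant in your numerical check should be $2000$ rather than $1000$, which is harmless given the slack.
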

The proof is given in Appendix~\ref{app:chebyshev}. For polynomials $Q_i$ from Lemma~\ref{lem:polynomial} and sets
$\mathcal Z_i\subseteq[2m^2]$, define
\begin{equation}\label{eq:ideal-block}
 f^\star_i(\vx):=\ind\bigl[\cnt{\vx_i}{\mathcal Z_i}\le r\bigr]Q_i(\overline{\vx}_i),
 \qquad i\in[m].
\end{equation}

\begin{lemma}[Approximating the conjunctions]\label{lem:targets}
Suppose that for every $i\in[m]$, the set $\mathcal Z_i$ is good for block $i$. If functions $\widehat f_i$, $i\in[m]$, satisfy
$\sum_{i=1}^m\norm{\widehat f_i-f^\star_i}_{L_1(\Dcal)}<2\varepsilon_0$, then
\[
 \E_{\vx}\ell_{\mathrm{hinge}}\Bigl(h^\star(\vx),
       -1+2\sum_{i=1}^m\widehat f_i(\vx)\Bigr)<8\varepsilon_0.
\]
\end{lemma}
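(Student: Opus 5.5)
The plan is to compare the candidate predictor $\widehat g(\vx):=-1+2\sum_i\widehat f_i(\vx)$ with the idealized predictor $g^\star(\vx):=-1+2\sum_i f^\star_i(\vx)$. We then bound the hinge loss of $g^\star$ pointwise outside a small exceptional set. Since $f\mapsto\ell_{\mathrm{hinge}}(y,f)$ is $1$-Lipschitz for $y\in\{\pm1\}$,
\[
 \E_{\vx}\ell_{\mathrm{hinge}}(h^\star,\widehat g)\le\E_{\vx}\ell_{\mathrm{hinge}}(h^\star,g^\star)+2\sum_i\norm{\widehat f_i-f^\star_i}_{L_1(\Dcal)}<\E_{\vx}\ell_{\mathrm{hinge}}(h^\star,g^\star)+4\varepsilon_0 .
\]
It therefore suffices to show $\E_{\vx}\ell_{\mathrm{hinge}}(h^\star,g^\star)\le4\varepsilon_0$.

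\textbf{Values of $f^\star_i$.} First we record the values of $f^\star_i$ from Lemma~\ref{lem:polynomial} and the goodness of $\mathcal Z_i$.
\begin{itemize}
\item If $\cnt{\vx_i}{\mathcal Z_i}>r$, then $f^\star_i=0$.
\item If $\cnt{\vx_i}{\mathcal Z_i}\le r$, then $V_i\subseteq\mathcal Z_i$ gives $\cnt{\vx_i}{V_i}\le\cnt{\vx_i}{\mathcal Z_i}\le r$. So either $\psi_i(\vx)=1$ and $f^\star_i=Q_i=1$, or $1\le\cnt{\vx_i}{V_i}\le r$ and $|f^\star_i|\le\varepsilon_0/m$.
\end{itemize}
Thus, in every case, $f^\star_i(\vx)\in\{0,1\}\cup[-\varepsilon_0/m,\varepsilon_0/m]$. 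Moreover, $f^\star_i(\vx)=1$ iff $\psi_i(\vx)=1$ and $\cnt{\vx_i}{\mathcal Z_i}\le r$. Otherwise $|f^\star_i(\vx)|\le\varepsilon_0/m$.

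\textbf{Case analysis.} Let $G:=\{\vx:\forall i,\ \psi_i(\vx)=1\Rightarrow\cnt{\vx_i}{\mathcal Z_i}\le r\}$. By a union bound over the second condition in \eqref{eq:good-set}, $\Prob_{\vx}[G^c]\le m\cdot\varepsilon_0/m=\varepsilon_0$.
\begin{itemize}
\item If $h^\star(\vx)=-1$, no $\psi_i$ holds. Hence $\sum_i f^\star_i\le\varepsilon_0$, so $g^\star\le-1+2\varepsilon_0$ and the hinge loss is at most $2\varepsilon_0$. This holds for every such $\vx$, with no exceptional set needed.
\item If $h^\star(\vx)=+1$ and $\vx\in G$, some $f^\star_i=1$ and the others sum to at least $-\varepsilon_0$. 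So $g^\star\ge1-2\varepsilon_0$ and the loss is at most $2\varepsilon_0$.
\item If $h^\star(\vx)=+1$ and $\vx\notin G$, we still have $\sum_i f^\star_i\ge-\varepsilon_0$, because each term is either $1$ or has magnitude at most $\varepsilon_0/m$. Then $g^\star\ge-1-2\varepsilon_0$, so the loss is at most $2+2\varepsilon_0\le3$.
\end{itemize}
Altogether, $\E_{\vx}\ell_{\mathrm{hinge}}(h^\star,g^\star)\le2\varepsilon_0+3\varepsilon_0$. This is slightly above the required $4\varepsilon_0$, so the constants need tightening.

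\textbf{Main obstacle: the constants.} To reach the strict bound $8\varepsilon_0$, I would avoid the crude split. Note that the loss is $\le2\varepsilon_0$ on $G\cup\{h^\star=-1\}$ and $\le2+2\varepsilon_0$ on $G^c$. This gives total loss at most $2\varepsilon_0+(2+2\varepsilon_0)\varepsilon_0\le4.01\varepsilon_0$, since $\varepsilon_0$ is tiny. Adding the strict $<4\varepsilon_0$ Lipschitz term yields $<8.01\varepsilon_0$, which is still marginally above the target.

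The loose step is the good-region bound, where the loss $2\varepsilon_0$ is counted at full size. A finer estimate on the exceptional set recovers the slack. There, some $\psi_i=1$ has $f^\star_i=0$, and the bound $\sum_i f^\star_i\ge-\varepsilon_0$ holds only with equality in the worst case. I would instead bound the total mass of the loss on $G^c$ by $(2+2\varepsilon_0)\Prob[G^c]$ together with the exact inequality $\Prob[G^c]\le\varepsilon_0$, and bound the good region by $2\varepsilon_0\Prob[G]$. Then
\[
 2\varepsilon_0\Prob[G]+(2+2\varepsilon_0)\Prob[G^c]\le2\varepsilon_0+2\varepsilon_0=4\varepsilon_0 .
\]
Combined with the strict $<4\varepsilon_0$ Lipschitz term, this gives $<8\varepsilon_0$.
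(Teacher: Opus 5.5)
Your proof is correct. Its final display is the whole argument: since $2\varepsilon_0\Prob_{\vx}[G]+(2+2\varepsilon_0)\Prob_{\vx}[G^c]=2\varepsilon_0+2\Prob_{\vx}[G^c]\le4\varepsilon_0$, the only fix needed was to stop bounding $\Prob_{\vx}[G]$ by $1$. The earlier $5\varepsilon_0$ and $4.01\varepsilon_0$ estimates, and the remark about equality holding in the worst case, can simply be deleted.

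Your first step, replacing $\widehat f_i$ by $f^\star_i$ via the Lipschitz bound, matches the paper. The second half takes a different route. The paper never analyzes the hinge loss of $g^\star$ directly. Instead it compares with the exact predictor $-1+2\sum_i\ind[\psi_i(\vx)=1]$, whose hinge loss is identically zero, and applies the Lipschitz bound a second time. For this it uses the per-block estimate $|f^\star_i-\ind[\psi_i=1]|\le\varepsilon_0/m+\ind[\psi_i=1,\ \cnt{\vx_i}{\mathcal Z_i}>r]$, which integrates to $2\varepsilon_0/m$. That reduces everything to an $L_1$ bound per block, and the constant $4\varepsilon_0$ falls out automatically, which is exactly where your version stumbled.

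Your pointwise case analysis is somewhat longer but carries a bit more information. It shows that $g^\star$ has margin at least $1-2\varepsilon_0$ everywhere outside a single exceptional set $G^c$ of mass at most $\varepsilon_0$. It also pays for the union of the bad events rather than their sum, though this difference disappears once goodness and the union bound are applied.
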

\begin{proof}
The containment $V_i\subseteq\mathcal Z_i$ and \eqref{eq:Q-props} imply
that 
\begin{equation}\label{eq:block-target-props}
 \E_{\vx}\bigl|f^\star_i(\vx)-\ind[\psi_i(\vx)=1]\bigr|
 \le\frac{\varepsilon_0}{m}
    +\Prob_{\vx}\bigl[\psi_i(\vx)=1,\ \cnt{\vx_i}{\mathcal Z_i}>r\bigr]
 \le\frac{2\varepsilon_0}{m}.
\end{equation}
Moreover,
$\ell_{\mathrm{hinge}}(h^\star(\vx),-1+2\sum_i\ind[\psi_i(\vx)=1])=0$.
Since $\ell_{\mathrm{hinge}}$ is
$1$-Lipschitz in its second argument, \eqref{eq:block-target-props} and
the triangle inequality give
\[
 \E_{\vx}\ell_{\mathrm{hinge}}\Bigl(h^\star(\vx),
       -1+2\sum_{i=1}^m\widehat f_i(\vx)\Bigr)
 \le 2\sum_{i=1}^m\norm{\widehat f_i-f^\star_i}_{L_1(\Dcal)}
       +4\varepsilon_0<8\varepsilon_0.
\]
\end{proof}

\subsection{Counters represent a cutoff times a polynomial}\label{sec:representation}

We now show how linear combinations of Stage-1 neurons approximate $f_i^\star$. First, we represent the polynomial using the Gaussian
initialization; then we incorporate the cutoff by combining neurons with
different biases. For
$\vw\in\R^{2m^2}$ and $\vz\in\{0,1\}^{2m^2}$, define the normalized random
ReLU feature
\begin{equation}\label{eq:rho}
 \rho_{\vw}(\vz):=\frac{\relu(\ip{\vw}{\vz})}{\kappa(2m^2+1)}.
\end{equation}

Set $\Lambda:=m^{100k}$ and
$M:=1/(16m\kappa)=\Theta(m\log(m/\varepsilon))$. We use only initial
weights satisfying $\norm{\vw}\le\kappa M=1/(16m)$, as required by
Definition~\ref{def:useful}. This keeps their contribution to the neuron's
input small, while excluding only a small Gaussian tail.

The following lemma represents each $Q_i$ exactly as an expectation of
the Gaussian ReLU features in \eqref{eq:rho}. Using the Gaussian initialization and various ReLU identities, this lemma will be critical in finding linear combinations of neurons that can approximate $f_i^\star$ after Stage~1.

\begin{lemma}[Centered Gaussian ReLU representation]\label{lem:gaussian}
Let $Q:\{0,1\}^{2m^2}\to\R$ be a polynomial of degree at most $k$ with
$|Q|\le(4(2m^2+1))^{k+1}$, and let $\vw\sim N(\vzero,\kappa^2I_{2m^2})$. There is a
function $\alpha_Q:\R^{2m^2}\to\R$ such that
\begin{enumerate}[label=(\roman*)]
\item $\alpha_Q(\vw)=0$ if $\norm{\vw}>\kappa M$, and $|\alpha_Q(\vw)|\le\Lambda M$ otherwise;
\item $\E[\alpha_Q(\vw)]=0$;
\item for every $\vu\in\{0,1\}^{m^2}$ and $\vz:=(\vu,\vone-\vu)$,
$\E[\alpha_Q(\vw)\rho_{\vw}(\vz)]=Q(\vz)$.
\end{enumerate}
\end{lemma}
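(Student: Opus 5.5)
The plan is to build $\alpha_Q$ first as an untruncated polynomial in $\vw/\kappa$, using a Hermite expansion of the ReLU. I would then truncate it to the ball $\norm{\vw}\le\kappa M$ and repair the small error this introduces.

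\textbf{Homogenization.} Write $\vg:=\vw/\kappa\sim N(\vzero,I_{2m^2})$. Every encoded input $\vz=(\vu,\vone-\vu)$ has two useful invariants: $\norm{\vz}^2=m^2$ and $\ip{\vone}{\vz}=m^2$. So $\ip{\vg}{\vz}/m\sim N(0,1)$, and $\rho_{\vw}(\vz)=\frac{m}{2m^2+1}\relu(\ip{\vg}{\vz}/m)$. Expand $\relu(t)=\sum_{\ell\ge0}c_\ell He_\ell(t)$ in normalized Hermite polynomials. Its coefficients satisfy $c_0,c_1\ne0$, $c_\ell=0$ for odd $\ell\ge3$, and $|c_\ell|\ge \ell^{-O(1)}$ for even $\ell$.

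The standard multivariate identity $\E[He_\beta(\vg)He_\ell(\ip{\vg}{\vv})]=\ind[|\beta|=\ell]\sqrt{\ell!/\beta!}\,\vv^\beta$ for unit $\vv$ then gives the following. For every multi-index $\beta$ with $|\beta|=\ell$,
\[
\E\bigl[He_\beta(\vg)\rho_{\vw}(\vz)\bigr]=\tfrac{m}{2m^2+1}\,c_\ell\sqrt{\ell!/\beta!}\;m^{-\ell}\vz^\beta .
\]
Hence degree-$\ell$ Hermite polynomials of $\vg$ reproduce exactly the degree-$\ell$ monomials of $\vz$, provided $c_\ell\ne0$.

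To avoid the vanishing odd coefficients and the constant term, I rewrite $Q$ on encoded points using the identity $1=\ip{\vone}{\vz}/m^2$. First write $Q$ as a multilinear polynomial of degree at most $k$. Then multiply each monomial of degree $\ell$ by $(\ip{\vone}{\vz}/m^2)^{e}$ with $e\in\{1,2\}$ chosen so that $\ell+e$ is even and positive; degree-$1$ terms may stay as they are. This yields $Q=\sum_{\ell\in L}Q_\ell$ on encoded points, where each $Q_\ell$ is homogeneous of degree $\ell\le k+2$ and $L\subseteq\{1\}\cup\{2,4,6,\dots\}$, so $0\notin L$.

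Define $p(\vg):=\sum_{\ell\in L}\sum_{|\beta|=\ell}\frac{(2m^2+1)m^{\ell-1}}{c_\ell\sqrt{\ell!/\beta!}}\,[\vz^\beta]Q_\ell\;He_\beta(\vg)$. Then $\E[p(\vg)\rho_{\vw}(\vz)]=Q(\vz)$ on all encoded points. Also $\E[p]=0$, since $p$ has no degree-$0$ Hermite component.

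\textbf{Coefficient bounds.} The coefficients of $Q$ in the monomial basis are bounded by $|Q|\cdot 2^{O(k)}(2m^2)^{O(k)}$, via Möbius inversion on the cube, and homogenization multiplies them by at most $(2m^2)^2/m^4$. The Hermite rescaling contributes $m^{O(k)}k^{O(k)}$. On $\norm{\vg}\le M$ we have $|He_\beta(\vg)|\le (CM)^{|\beta|}$, and there are at most $(2m^2)^{k+2}$ multi-indices. Since $k=\Theta(m^{2/3}\log(m/\varepsilon))$ gives $k^{O(k)}\le m^{O(k)}$ and $M^{O(k)}\le m^{O(k)}$, altogether $|p|\le m^{ck}$ on the ball, which is well below $\Lambda M=m^{100k}M$.

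\textbf{Truncation and exact repair (main obstacle).} The naive choice $\ind[\norm{\vw}\le\kappa M]\,p$ is only approximately correct. It has mean and reproduction errors of order $e^{-\Omega(M^2)}m^{O(k)}$, which is tiny since $M=\Theta(m\log(m/\varepsilon))$, but the statement asks for exact equalities. I fix this with a finite-dimensional correction.

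Let $\mathcal V$ be the span of $\{He_\beta:\ 1\le|\beta|\le k+2,\ |\beta|\in L\}$, together with one extra element used to correct the mean. Consider the linear map $A:\mathcal V\to\R^{\Xcal'}\times\R$, where $\Xcal'$ is the set of encoded points, given by $q\mapsto\bigl((\E[\ind\,q\,\rho_{\vw}(\vz)])_{\vz},\ \E[\ind\,q]\bigr)$. Here $\ind$ denotes the indicator of the ball.

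Without the indicator, this map is onto the target space (the set of polynomials on $\Xcal'$ times $\R$), because of the exact identities above and because the Hermite monomials in $\vz$ span all functions of degree at most $k$. With the indicator, $A$ differs from this untruncated map by an operator of norm at most $e^{-\Omega(M^2)}m^{O(k)}$. Consequently a right inverse of $A$ exists and can be obtained by a Neumann series. Its norm is at most twice that of the untruncated right inverse (the map $p$ above), provided $M^2\gg k\log m$, which holds for our $M$ and $k$. This $M^2\gg k\log m$ condition is exactly the step I expect to need care with.

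Setting $\alpha_Q:=\ind\cdot A^{-1}(Q,0)$ then satisfies (i), (ii) and (iii) exactly, with $|\alpha_Q|\le 2m^{ck}\le\Lambda M$.
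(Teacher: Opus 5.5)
Your untruncated construction is correct: the Hermite identity, the homogenization by powers of $\ip{\vone}{\vz}/m^2$ (which removes the constant term and the odd degrees $\ge3$ where $c_\ell=0$), and $\E[p]=0$ all check out. The genuine gap is property (i). Your $p$ is a polynomial of degree up to $k+2$ in $\mathbf g=\vw/\kappa$. You bound it on the ball by $m^{O(k)}k^{O(k)}(CM)^{k+2}$ and then absorb the excess via ``$k^{O(k)}\le m^{O(k)}$ and $M^{O(k)}\le m^{O(k)}$''. Neither claim holds. Both $k\approx10m^{2/3}\log(m/\varepsilon)$ and $M=C_0m\log(m/\varepsilon)/16$ are unbounded as $\varepsilon\to0$ with $m$ fixed, while $\Lambda M=m^{100k}M$ has base $m$, and the lemma is needed for every $\varepsilon\in(0,1/4)$. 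So the estimate you wrote does not yield $|\alpha_Q|\le\Lambda M$.

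You can do better by keeping the factor $\sqrt{\beta!/\ell!}$ in your coefficients and using that every $\beta_a\le3$ (AM--GM on the ball). This gives roughly $m^{O(k)}(C\,mM/k)^{k}$, where $M/k\approx C_0m^{1/3}/160$ does not depend on $\varepsilon$. It still leaves a factor $C_0^{\Theta(k)}$, so the exponent $100$ in $\Lambda$ would have to be checked against $C_0$ at small $m$; the statement has no such dependence. The same sup-on-the-ball norm of the right inverse enters your Neumann step, so this also affects your final ``$2m^{ck}$''.

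The exact repair also skips a needed step. You declare $A:\mathcal V\to\R^{\Xcal'}\times\R$, but when $k+2<m^2$ the untruncated map is onto only $\mathcal P\times\R$, where $\mathcal P$ is the space of functions of degree at most $k+2$ on encoded points. ``A small perturbation of a surjection has a right inverse'' then applies only if the truncated $A$ also maps into $\mathcal P\times\R$. It does. Write $\mathbf g=R\vtheta$ and split each Hermite polynomial into homogeneous parts. The indicator only reweights radial moments, and each angular piece is, by your Hermite identity (using $\norm{\vz}=m$), a polynomial of degree at most $|\beta|$ in $\vz$. But this has to be proved, and the right-inverse bound must hold on all of $\mathcal P\times\R$, not just at $(Q,0)$.

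The paper avoids both issues with an idea your route lacks: it makes the coefficient positively $1$-homogeneous in $\vw$. It diagonalizes the Cho--Saul kernel in the Walsh basis and sets $\alpha_Q^0=\sum_{1\le|S|\le k}\widehat Q(S)\widehat\rho_{\vw}(S)/\lambda_S$, plus a multiple of $\ip{\vw}{\vone}$ (via Gaussian integration by parts) for the constant term. Then $\alpha_Q^0\rho_{\vw}$ is $2$-homogeneous. By independence of radius and direction, restricting to $\norm{\vw}\le\kappa M$ multiplies the representation by an exact scalar $\nu\ge1/2$, which is then divided out. Centering survives truncation by rotational invariance. The resulting bound $|\alpha_Q^0|\le(\Lambda/2)\norm{\vw}/\kappa$ costs a single factor of $M$, with no $\varepsilon$- or $C_0$-dependent base and no perturbation argument.
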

Appendix~\ref{app:gaussian} proves the lemma using the kernel of
Gaussian ReLU features \citep{ChoSaul}, its Fourier expansion on the cube,
and rotational invariance.

The block target $f_i^\star$ is zero whenever
$\cnt{\vx_i}{\mathcal Z_i}>r$. The following identity helps us approximate
this behavior using linear combinations of ReLU neurons with different biases.

\begin{lemma}[Grid identity]\label{lem:grid}
For every integer $q\ge0$ and every real $\zeta$ with $|\zeta|\le1/16$,
\begin{equation}\label{eq:grid-identity}
 \sum_{j=0}^r\bigl(\relu(j-q+\zeta)-\relu(j-\tfrac12-q+\zeta)\bigr)
 =\tfrac12(r-q)_++\ind[q\le r]\relu(\zeta).
\end{equation}
\end{lemma}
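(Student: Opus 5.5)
The plan is to evaluate the sum term by term. For each $j$, let $n:=j-q$, which is an integer, and set
\[
 g(n):=\relu(n+\zeta)-\relu(n-\tfrac12+\zeta).
\]
Since $|\zeta|\le1/16$, the arguments $n+\zeta$ and $n-\tfrac12+\zeta$ cannot straddle zero except in the single case $n=0$. So the idea is to classify $g(n)$ by the sign of $n$.

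\emph{Case $n\ge1$.} Both arguments are positive, because $n-\tfrac12+\zeta\ge\tfrac12-\tfrac1{16}>0$. Hence $g(n)=\tfrac12$.

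\emph{Case $n\le-1$.} Both arguments are negative, because $n+\zeta\le-1+\tfrac1{16}<0$. Hence $g(n)=0$.

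\emph{Case $n=0$.} Here $n-\tfrac12+\zeta=-\tfrac12+\zeta<0$, so the second ReLU vanishes and $g(0)=\relu(\zeta)$.

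Summing over $j=0,\dots,r$, the index runs over $n=j-q\in\{-q,\dots,r-q\}$. The number of indices with $n\ge1$ is $\#\{j\in\{0,\dots,r\}:j\ge q+1\}$. Since $q\ge0$, the range $j\ge q+1$ lies inside $j\ge1$, so this count is exactly $(r-q)_+$; it is $0$ when $q\ge r$. These indices contribute $\tfrac12(r-q)_+$.

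The index $n=0$ corresponds to $j=q$, which lies in the range $\{0,\dots,r\}$ exactly when $q\le r$, again using $q\ge0$. It contributes $\ind[q\le r]\relu(\zeta)$. All remaining indices contribute $0$, and this gives \eqref{eq:grid-identity}.

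There is no real obstacle here; the only care needed is in the counting. One must check that the boundary $j=q$ is assigned to the $\relu(\zeta)$ term and not to the $\tfrac12$ terms, and that $q\ge0$ keeps the relevant indices inside $\{0,\dots,r\}$.
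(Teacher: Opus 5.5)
Your proposal is correct and is the same argument as the paper's: you classify each summand by the sign of $j-q$, getting $0$ for $j<q$, $\relu(\zeta)$ for $j=q$, and $\tfrac12$ for $j>q$, and then count the indices in $\{0,\dots,r\}$. You also spell out the counting step the paper leaves implicit, namely that $q\ge0$ makes the count of $j\ge q+1$ equal to $(r-q)_+$ and puts $j=q$ in range exactly when $q\le r$.
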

The proof is given in Appendix~\ref{app:grid}.
The constants $j$ and $j-\tfrac12$ in Lemma~\ref{lem:grid}, for $j=0,\ldots,r$,
form the grid $G:=\{-\frac12,0,\frac12,\ldots,r\}$. We will apply this identity
to combine trained neurons whose shifted biases $\bb_{ij}$ are close to these
values. Since these biases are random, we use those within distance
$\tau:=\varepsilon_0\kappa/(100m\Lambda M)$ of $G$. This makes the approximation
error from replacing each bias by its nearest grid point small.
Let $\Theta_\tau:=\bigcup_{\gamma\in G}[\gamma-\tau,\gamma+\tau]$;
these intervals are disjoint and contained in $[-1,r+2]$.
For $b\in\Theta_\tau$, set $\beta(b)=+1$ if the nearest grid point is an
integer and $-1$ otherwise, matching the signs in \eqref{eq:grid-identity};
set $\beta(b)=0$ outside $\Theta_\tau$.

For each neuron, write
\begin{equation}\label{eq:trained-activation}
 \phi_{ij}(\vx):=\relu(\ip{\vw_{ij}^{(1)}}{\overline{\vx}_i}+b_{ij}^{(1)}).
\end{equation}

\begin{lemma}[One neuron in expectation]\label{lem:one-neuron}
Fix any realization of the Stage-1 batch. Let $\mathcal U$ be as in Definition~\ref{def:useful}, $\mathcal Z=\mathcal Z_i(\mathcal U)$, and $Q$ as in
Lemma~\ref{lem:gaussian}. For $j\in[J]$ define
\[
 v_{ij}:=\frac{4m^2}{\kappa(2m^2+1)p(\mathcal U)\tau}\,
 \ind[(i,j)\text{ is }\mathcal U\text{-useful}]\,
 \ind[\bb_{ij}\in\Theta_\tau]\,
 \beta(\bb_{ij})\,\alpha_Q(\vw_{ij}^{(0)}).
\]
Then $|v_{ij}|\le v_{\max}$, where
\begin{equation}\label{eq:vbar}
 v_{\max}:=\frac{2\Lambda M}{\kappa\tau}B^{4C_0^2s},
\end{equation}
and for every $\vx\in\Xcal_m$, with expectation over the initialization and noise,
\[
 \Bigl|\E[v_{ij}\phi_{ij}(\vx)]-\ind\bigl[\cnt{\vx_i}{\mathcal Z}\le r\bigr]Q(\overline{\vx}_i)\Bigr|
 \le\frac{\varepsilon_0}{100m}.
\]
\end{lemma}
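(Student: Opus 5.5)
The plan is to condition on the batch and compute $\E[v_{ij}\phi_{ij}(\vx)]$ by factoring over the independent random quantities. These are the noise event $E_{ij}(\mathcal U)$, the shifted bias $\bb_{ij}$ (uniform on $[1-m^2,3m^2+1]$), the weight $\vw_{ij}^{(0)}$ and the sign $a_{ij}^{(0)}$.

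\textbf{Bound on $|v_{ij}|$.} This follows directly from Lemma~\ref{lem:gaussian}(i), which gives $|\alpha_Q|\le\Lambda M$. Together with $|\beta|\le1$, $4m^2/(2m^2+1)\le2$, and $p(\mathcal U)^{-1}\le B^{4C_0^2s}$ from Lemma~\ref{lem:activation}, this yields the bound $v_{\max}$.

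\textbf{Reduction to a Gaussian average.} On the usefulness event, Lemma~\ref{lem:sign-step} gives
\[
\phi_{ij}(\vx)=\relu\bigl(\bb_{ij}-q+\zeta\bigr),\qquad q:=\cnt{\vx_i}{\mathcal Z},\quad \zeta:=\ip{\vw_{ij}^{(0)}}{\overline{\vx}_i}.
\]
Here $|\zeta|\le1/16$, because $\alpha_Q$ vanishes unless $\norm{\vw}\le\kappa M=1/(16m)$. For this reason the indicator $\norm{\vw}\le1/(16m)$ in usefulness is redundant when multiplied by $\alpha_Q$.

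Factoring the expectation:
\begin{itemize}
\item The noise event contributes $p(\mathcal U)$, which cancels.
\item $\Prob[a_{ij}^{(0)}>0]=1/2$.
\item The bias density is $1/(4m^2)$. On $\Theta_\tau\subseteq[-1,r+2]$ it is supported, so the bias constraint of usefulness is automatic.
\end{itemize}
These factors give
\[
\E[v_{ij}\phi_{ij}]=\frac{2}{\kappa(2m^2+1)\tau}\cdot\frac12\,\E_{\vw}\Bigl[\alpha_Q(\vw)\int_{\Theta_\tau}\beta(b)\relu(b-q+\zeta)\,db\Bigr].
\]
Each interval $[\gamma-\tau,\gamma+\tau]$ has length $2\tau$. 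Replacing $\relu(b-q+\zeta)$ by $\relu(\gamma-q+\zeta)$ costs at most $\tau$ per point, since $\relu$ is $1$-Lipschitz. Hence the integral equals
\[
2\tau\sum_{\gamma\in G}\beta(\gamma)\relu(\gamma-q+\zeta)+O(|G|\tau^2).
\]
By Lemma~\ref{lem:grid}, the main term is $2\tau\bigl(\tfrac12(r-q)_++\ind[q\le r]\relu(\zeta)\bigr)$. I should double check that the grid includes $-1/2$ paired with $j=0$, which matches the sum over $j=0,\dots,r$. Substituting yields
\[
\frac{1}{\kappa(2m^2+1)}\,\E_{\vw}\Bigl[\alpha_Q(\vw)\bigl(\tfrac12(r-q)_++\ind[q\le r]\relu(\ip{\vw}{\overline{\vx}_i})\bigr)\Bigr]+\text{error}.
\]
The first term vanishes by Lemma~\ref{lem:gaussian}(ii), because it does not depend on $\vw$. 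This centering is exactly why (ii) is needed. The second term equals $\ind[q\le r]\,\E[\alpha_Q\rho_{\vw}(\overline{\vx}_i)]=\ind[q\le r]Q(\overline{\vx}_i)$ by Lemma~\ref{lem:gaussian}(iii).

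\textbf{Main obstacle: the discretization error.} The prefactor is $1/(\kappa(2m^2+1)\tau)$, one factor of $\tau$ cancels, and $|\alpha_Q|\le\Lambda M$. The error is therefore at most
\[
\frac{\Lambda M\,|G|\,\tau}{\kappa(2m^2+1)}\approx\frac{2(r+1)\Lambda M\tau}{\kappa(2m^2+1)}.
\]
Since $\tau=\varepsilon_0\kappa/(100m\Lambda M)$ and $r+1\le m^2$, this is at most $\varepsilon_0/(100m)$, possibly up to a constant that must be checked. I expect this bookkeeping of constants and Lipschitz terms, and verifying that the $\Theta_\tau$ intervals lie inside the bias support, to be the only delicate part. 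If a constant is off, the fallback is to use the sharper pointwise error $\tau/2$ on average rather than $\tau$.
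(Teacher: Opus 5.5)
Your proposal is correct and follows the paper's proof essentially step for step: you drop the redundant norm and bias indicators, average out the noise event and the output sign, discretize the uniform bias onto the grid $G$, apply Lemma~\ref{lem:grid}, remove the count-only term via centering (ii), and conclude with (iii). One small arithmetic slip: the prefactor after averaging is $\frac{4m^2}{\kappa(2m^2+1)p(\mathcal U)\tau}\cdot p(\mathcal U)\cdot\frac12\cdot\frac{1}{4m^2}=\frac{1}{2\kappa(2m^2+1)\tau}$, not $\frac{1}{\kappa(2m^2+1)\tau}$; your substituted line already uses the correct value implicitly. With the corrected prefactor, the crude Lipschitz bound gives error at most $\frac{2(r+1)\Lambda M\tau}{\kappa(2m^2+1)}=\frac{2(r+1)\varepsilon_0}{100m(2m^2+1)}\le\frac{\varepsilon_0}{100m}$, so no fallback is needed.
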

\begin{proof}
\eqref{eq:vbar} follows from $p(\mathcal U)\ge B^{-4C_0^2s}$, 
Lemma~\ref{lem:gaussian}(i), $|\beta| \leq 1$, and the definition of a useful neuron. Fix $\vx$.
Since $\kappa M=1/(16m)$, Definition~\ref{def:useful} gives
\[
 \begin{aligned}
 \ind\left[(i,j)\text{ is }\mathcal U\text{-useful}\right]
 &=\ind\left[E_{ij}(\mathcal U)\right]\ind\left[a_{ij}^{(0)}>0\right]
 \cdot\ind\left[\norm{\vw_{ij}^{(0)}}\le\kappa M\right]
 \ind\left[\bb_{ij}\in[-1,r+2]\right].
 \end{aligned}
\]
In the definition of $v_{ij}$, the norm indicator can be removed because
$\alpha_Q(\vw_{ij}^{(0)})=0$ when
$\norm{\vw_{ij}^{(0)}}>\kappa M$. The bias indicator can also be removed:
$\ind\left[\bb_{ij}\in\Theta_\tau\right]$ already enforces
$\bb_{ij}\in[-1,r+2]$, since $\Theta_\tau\subseteq[-1,r+2]$.
On the useful-neuron event, \eqref{eq:useful-affine} and
\eqref{eq:trained-activation} give
$\phi_{ij}(\vx)=\relu(\bb_{ij}-\cnt{\vx_i}{\mathcal Z}+\ip{\vw_{ij}^{(0)}}{\overline{\vx}_i})$;
outside that event, $v_{ij}=0$. Substituting this formula gives
\begin{equation}\label{eq:weighted-neuron}
 \begin{aligned}
 v_{ij}\phi_{ij}(\vx)
 &=\frac{4m^2\alpha_Q(\vw_{ij}^{(0)})}{\kappa(2m^2+1)p(\mathcal U)\tau}
   \ind[E_{ij}(\mathcal U)]\ind[a_{ij}^{(0)}>0]\\
 &\qquad\cdot\ind[\bb_{ij}\in\Theta_\tau]\beta(\bb_{ij})
   \relu\bigl(\bb_{ij}-\cnt{\vx_i}{\mathcal Z}+\ip{\vw_{ij}^{(0)}}{\overline{\vx}_i}\bigr).
 \end{aligned}
\end{equation}
The indicators $\ind[E_{ij}(\mathcal U)]$ and $\ind[a_{ij}^{(0)}>0]$
are independent of each other and of $(\bb_{ij},\vw_{ij}^{(0)})$, and have
expectations $p(\mathcal U)$ and $1/2$, respectively. Taking expectation over these two indicators replaces their product
by $p(\mathcal U)/2$.
The bias $\bb_{ij}$ is uniform on an interval of length $4m^2$, so averaging
over it means integrating with density $1/(4m^2)$. Thus,
\[
 \begin{aligned}
 \E[v_{ij}\phi_{ij}(\vx)]
 &=\frac1{2\kappa(2m^2+1)\tau}\\
 &\quad\cdot\E_{\vw_{ij}^{(0)}}\left[\alpha_Q(\vw_{ij}^{(0)})
 \sum_{\gamma\in G}\beta(\gamma)
 \int_{\gamma-\tau}^{\gamma+\tau}
 \relu\bigl(b-\cnt{\vx_i}{\mathcal Z}+\ip{\vw_{ij}^{(0)}}{\overline{\vx}_i}\bigr)\,db\right],
 \end{aligned}
\]
where the remaining expectation is over $\vw_{ij}^{(0)}\sim N(\vzero,\kappa^2I_{2m^2})$.
Replacing $b$ by $\gamma$ inside each ReLU changes this expression by at
most $2(r+1)\Lambda M\tau/(\kappa(2m^2+1))\le\varepsilon_0/(100m)$,
since ReLU is $1$-Lipschitz. After this replacement, the definition of
$\beta$ and Lemma~\ref{lem:grid} give
\[
 \begin{aligned}
 &\frac1{\kappa(2m^2+1)}\,
 \E_{\vw_{ij}^{(0)}}\left[\alpha_Q(\vw_{ij}^{(0)})
 \left(\tfrac12(r-\cnt{\vx_i}{\mathcal Z})_+
       +\ind[\cnt{\vx_i}{\mathcal Z}\le r]\relu\bigl(\ip{\vw_{ij}^{(0)}}{\overline{\vx}_i}\bigr)\right)\right]\\
 &\qquad=\ind[\cnt{\vx_i}{\mathcal Z}\le r]\,\E_{\vw_{ij}^{(0)}}[\alpha_Q(\vw_{ij}^{(0)})\rho_{\vw_{ij}^{(0)}}(\overline{\vx}_i)]\\
 &\qquad=\ind[\cnt{\vx_i}{\mathcal Z}\le r]\,Q(\overline{\vx}_i).
 \end{aligned}
\]
Here the grid identity applies because
$|\ip{\vw_{ij}^{(0)}}{\overline{\vx}_i}|\le1/16$ on the support of $\alpha_Q$, and the count-only
term cancels because $\E_{\vw_{ij}^{(0)}}[\alpha_Q(\vw_{ij}^{(0)})]=0$.
The last equality uses Lemma~\ref{lem:gaussian}(iii).
The error from replacing $b$ by $\gamma$ is at most $\varepsilon_0/(100m)$,
proving the claim.
\end{proof}

\subsection{Completing the proof}\label{sec:output}

For the analysis of the output layer, collect the fixed hidden activations as
\[
 \vphi(\vx):=(1,(\phi_{ij}(\vx))_{i,j})\in\R^{1+mJ}.
\]
Then $F_t(\vx)=\ip{\va^{(t)}}{\vphi(\vx)}$ for every $t\ge1$.

\begin{proposition}[Approximation by the trained layer]\label{prop:features}
Suppose that the event in Lemma~\ref{lem:compression} holds. With probability
at least $1-\delta$ over the initialization and the noise, there is an
output vector $\va^*$ with $\norm{\va^*}\le\sqrt2$ such that
\begin{align}\label{eq:hinge_bound}
 \E_{\vx}\ell_{\mathrm{hinge}}(h^\star(\vx),\ip{\va^*}{\vphi(\vx)})<8\varepsilon_0.
\end{align}
\end{proposition}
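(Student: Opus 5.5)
Conditional on the Stage-1 batch satisfying the event of Lemma~\ref{lem:compression}, fix for each block $i$ the subset $\mathcal U_i$ it provides. Set $\mathcal Z_i:=\mathcal Z_i(\mathcal U_i)$, which is good for block $i$, and let $Q_i$ be the polynomial of Lemma~\ref{lem:polynomial}. The subset $\mathcal U_i$ is nonempty: if $\mathcal U_i=\emptyset$, then $\mathcal Z_i=[2m^2]$ and $\cnt{\vx_i}{\mathcal Z_i}=m^2>r$, so goodness would force $\Prob_{\vx}[\psi_i=1]\le\varepsilon_0/m$; this degenerate case is handled by taking $\widehat f_i\equiv0$, since then $\norm{0-f_i^\star}_{L_1}=0$ (as $f_i^\star\equiv 0$). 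Each $y^{(u)}=+1$ for $u\in\mathcal U_i$ because $\psi_i(\vx^{(u)})=1$. Hence $\mathcal U_i$ qualifies for Definition~\ref{def:useful}, and Lemma~\ref{lem:one-neuron} applies with $\mathcal U=\mathcal U_i$ and $Q=Q_i$.

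The candidate is the empirical average
\[
\widehat f_i(\vx):=\frac1J\sum_{j=1}^J v_{ij}\phi_{ij}(\vx),
\qquad
a^*_0:=-1,\quad a^*_{ij}:=\frac{2v_{ij}}{J}.
\]
With this choice, $\ip{\va^*}{\vphi(\vx)}=-1+2\sum_i\widehat f_i(\vx)$. Conditional on the batch, the neurons $(i,j)$ for $j\in[J]$ are i.i.d.: the initialization is independent across neurons, and by Lemma~\ref{lem:activation} so are the noise events. Lemma~\ref{lem:one-neuron} then gives $|\E\widehat f_i(\vx)-f_i^\star(\vx)|\le\varepsilon_0/(100m)$ for every $\vx$. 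By Lemma~\ref{lem:targets}, it suffices to show $\sum_i\norm{\widehat f_i-f^\star_i}_{L_1(\Dcal)}<2\varepsilon_0$ together with $\norm{\va^*}\le\sqrt2$, with probability $1-\delta$ over the initialization and noise.

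For the $L_1$ bound, the domain $\Xcal_m$ is finite but exponentially large, so I use a pointwise concentration uniform over the domain. The block-$i$ neurons depend on $\vx$ only through $\overline{\vx}_i\in\{0,1\}^{2m^2}$, which takes at most $2^{m^2}$ values. For each such value, $v_{ij}\phi_{ij}(\vx)$ is bounded by $v_{\max}\cdot O(m^2)$, because $\phi_{ij}\le r+3$ on useful neurons and $v_{ij}=0$ otherwise. Hoeffding's inequality plus a union bound over $m2^{m^2}$ pairs gives $\sup_{\vx}|\widehat f_i-\E\widehat f_i|\le\varepsilon_0/(100m)$ once $J\gtrsim v_{\max}^2m^6(m^2+\log(1/\delta))/\varepsilon_0^2$. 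Summing over $i$ gives $\sum_i\norm{\widehat f_i-f_i^\star}_{L_1}\le\varepsilon_0/50<2\varepsilon_0$. This pointwise route avoids any dependence on $\Dcal$, which matters because the bound must hold distribution-free. An alternative is to bound the expected $L_1$ error by variance and apply Markov, but the uniform bound is cleaner.

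For the norm, $\norm{\va^*}^2=1+4\sum_{i,j}v_{ij}^2/J^2$, so we need $\sum_{i,j}v_{ij}^2\le J^2/4$. Since $|v_{ij}|\le v_{\max}$, it suffices that $4mv_{\max}^2\le J$, which the same width requirement guarantees with room to spare. Thus the main work is checking that Table~\ref{tab:params} sets $J\ge \mathrm{poly}(m,1/\varepsilon_0)\,v_{\max}^2\,m^2$. Plugging in \eqref{eq:vbar}, we have $\log v_{\max}=O(k\log m+s\log B)=O(m^{2/3}\log^2(m/\varepsilon))$, consistent with the runtime bound. I expect the main obstacle to be this quantitative bookkeeping, together with making the conditioning rigorous: the i.i.d.\ structure across $j$ holds given the batch, and the failure probabilities $\delta$ must combine correctly with those of Lemma~\ref{lem:compression}.
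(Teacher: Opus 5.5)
Your argument is correct and shares the paper's construction: the same $\widehat f_i=J^{-1}\sum_j v_{ij}\phi_{ij}$ and $\va^*$, the same treatment of an empty $\mathcal U_i$, Lemma~\ref{lem:one-neuron} for the mean, Lemma~\ref{lem:targets} for the hinge loss, and the norm bound $\norm{\va^*}^2\le1+4mv_{\max}^2/J$.

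The only genuine difference is the concentration step. The paper uses second moments only. It gets $\E|\widehat f_i(\vx)-\E\widehat f_i(\vx)|\le3m^2v_{\max}/\sqrt J$ pointwise, integrates against $\Dcal$ to bound $\E\sum_i\norm{\widehat f_i-\E\widehat f_i}_{L_1(\Dcal)}$, and then applies Markov. You use Hoeffding plus a union bound over the $m2^{m^2}$ (block, pattern) pairs to get sup-norm control.

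\textbf{What the paper's route buys.} It needs no union bound over an exponentially large domain, and its width requirement is exactly the inequality $\sqrt J\ge3m^3v_{\max}/(\delta\varepsilon_0)$ already verified in Lemma~\ref{lem:hyperparameters}. The cost is a $1/\delta^2$ dependence in $J$ and a success event that depends on $\Dcal$.

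\textbf{What your route buys.} It gives a $\Dcal$-independent event with only $\log(1/\delta)$ dependence. The cost is an extra factor of order $m^2+\log(m/\delta)$ in $J$. That inequality is not among those checked in Lemma~\ref{lem:hyperparameters}, but it follows from the choice of $J$ in Table~\ref{tab:params} by the same argument, since the $C_0^3$ in $\log J$ absorbs any polynomial factor.

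\textbf{Your motivation is off.} You say the uniform bound matters for distribution-freeness, but $\Dcal$ is fixed at the start of Section~\ref{sec:proof}. Distribution-free learning only asks for small expected error for each $\Dcal$ separately, so a $\Dcal$-dependent good event is fully legitimate. The uniform bound is a convenience, not a necessity.

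\textbf{Tighten the independence claim.} The trained features $\phi_{ij}$ are not independent across $j$ in general. Every neuron's Stage-1 gradient carries the factor $1/(1+e^{F(\vx^{(u)};\vtheta_0+\bar\vxi^{(u)})})$, which depends on the whole network. What is i.i.d.\ given the batch is the product $v_{ij}\phi_{ij}(\vx)$. Indeed, $v_{ij}=0$ off the useful event, and on it Lemma~\ref{lem:sign-step} expresses $\phi_{ij}$ through the neuron's own $\bb_{ij}$ and $\vw^{(0)}_{ij}$, as in \eqref{eq:weighted-neuron}. That product independence is what your Hoeffding step actually uses.
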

\begin{proof}
Choose subsets $\mathcal U_i$ as in Lemma~\ref{lem:compression}, write
$\mathcal Z_i=\mathcal Z_i(\mathcal U_i)$, take $Q_i$ from
Lemma~\ref{lem:polynomial}, and define $f^\star_i$ by
\eqref{eq:ideal-block}. Since $\psi_i(\vx^{(u)})=1$ implies
$y^{(u)}=h^\star(\vx^{(u)})=+1$, each nonempty $\mathcal U_i$ contains
at most $s$ samples, all with label $+1$, as required to apply
Lemma~\ref{lem:one-neuron}. If $\mathcal U_i=\emptyset$, then
$f^\star_i=0$, since every encoded block has $m^2>r$ active literals.
For each $i$ with $\mathcal U_i\ne\emptyset$, let $v_{ij}$ be the
coefficients of Lemma~\ref{lem:one-neuron} with $\mathcal U=\mathcal U_i$
and $Q=Q_i$; otherwise set $v_{ij}=0$. Define
$\widehat f_i:=J^{-1}\sum_{j=1}^Jv_{ij}\phi_{ij}$. By
Lemma~\ref{lem:one-neuron} (or $\widehat f_i=f^\star_i=0$ for an empty subset),
for every $\vx\in\Xcal_m$,
\begin{equation}\label{eq:block-mean-approximation}
 |\E\widehat f_i(\vx)-f^\star_i(\vx)|\le\frac{\varepsilon_0}{100m}.
\end{equation}

By \eqref{eq:weighted-neuron}, each $v_{ij}\phi_{ij}(\vx)$ depends only on
that neuron's own initialization and noise. Thus, with the batch fixed,
the random variables $v_{ij}\phi_{ij}(\vx)$, $j\in[J]$, are i.i.d.
Also, \eqref{eq:useful-affine} and \eqref{eq:trained-activation} give
$\phi_{ij}(\vx)\le3m^2$ whenever $v_{ij}\ne0$, so
$|v_{ij}\phi_{ij}(\vx)|\le3m^2v_{\max}$. Cauchy--Schwarz and independence give
\[
 \begin{aligned}
 \E\bigl|\widehat f_i(\vx)-\E\widehat f_i(\vx)\bigr|
 &\le\sqrt{\operatorname{Var}(\widehat f_i(\vx))}
 =\frac1J\sqrt{\sum_{j=1}^J\operatorname{Var}(v_{ij}\phi_{ij}(\vx))}\\
 &\le\frac1J\sqrt{\sum_{j=1}^J\E[(v_{ij}\phi_{ij}(\vx))^2]}
 \le\frac{3m^2v_{\max}}{\sqrt J}.
 \end{aligned}
\]
Summing over the $m$ blocks and integrating over $\vx\sim\Dcal$ gives
\begin{equation}\label{eq:block-sampling-error}
 \E\sum_{i=1}^m\norm{\widehat f_i-\E\widehat f_i}_{L_1(\Dcal)}
 \le\frac{3m^3v_{\max}}{\sqrt J}\le\delta\varepsilon_0,
\end{equation}
where we used the width bound (on $J$) in Lemma~\ref{lem:hyperparameters} in the appendix.
By Markov's inequality and \eqref{eq:block-sampling-error}, with probability
at least $1-\delta$,
$\sum_{i=1}^m\norm{\widehat f_i-\E\widehat f_i}_{L_1(\Dcal)}<\varepsilon_0$.
Equation~\eqref{eq:block-mean-approximation} and the triangle inequality then give
\[
\sum_{i=1}^m\norm{\widehat f_i-f^\star_i}_{L_1(\Dcal)}<2\varepsilon_0.
\]
Define $\va^*$ by $a_0^*=-1$ and $a_{ij}^*=2v_{ij}/J$. Then
$\ip{\va^*}{\vphi}=-1+2\sum_{i=1}^m\widehat f_i$, so
\eqref{eq:hinge_bound} follows
by Lemma~\ref{lem:targets}, and
$\norm{\va^*}^2\le1+4mv_{\max}^2/J\le2$, where again we used the bound on $J$ from Lemma~\ref{lem:hyperparameters} in the appendix.
\end{proof}

It remains to show that Stage~2 finds an output vector almost as good as
$\va^*$. We combine the projected-SGD bound with a second-moment bound
valid for all neurons, including those not used by $\va^*$.

\begin{proofof}{Theorem~\ref{thm:main}}
A sign step
changes each hidden parameter by at most one, and each encoded block has $m^2$
active literals, so by \eqref{eq:initialization}, uniformly over
$\vx\in\Xcal_m$,
\[
 |\ip{\vw_{ij}^{(1)}}{\overline{\vx}_i}+b_{ij}^{(1)}|
 \le |b_{ij}^{(0)}|+m\norm{\vw_{ij}^{(0)}}+m^2+1
 \le 3m^2+1+m\norm{\vw_{ij}^{(0)}}.
\]
Since $\E\norm{\vw_{ij}^{(0)}}^2=2m^2\kappa^2$, the triangle inequality
in $L_2$ gives
\[
 \left(\E\sup_{\vx\in\Xcal_m}\phi_{ij}(\vx)^2\right)^{1/2}
 \le3m^2+1+\sqrt2\,m^2\kappa\le4m^2.
\]
The last inequality uses $\kappa=1/(16mM)$ and
$M^2\ge16(2m^2+1)$. Summing over the $mJ$ neurons and the constant
coordinate gives
\begin{equation}\label{eq:feature-moment}
 \E\sup_{\vx\in\Xcal_m}\norm{\vphi(\vx)}^2\le1+16m^5J.
\end{equation}
This bound holds whatever the Stage-1 gradient signs are, so it does not
require conditioning on any success event. 
Let $\mathcal E$ be the event that the conclusions of
Lemma~\ref{lem:compression} and Proposition~\ref{prop:features} both hold,
and set $q:=3\log(1/\varepsilon)$. Define $\widetilde{\va}$ to be equal to $q\va^*$ on $\mathcal E$ and zero otherwise.
The comparator depends only on the Stage-1 batch, initialization, and noise,
and is fixed conditional on these quantities. On $\mathcal E$, its norm satisfies
$\norm{\widetilde{\va}}\le\sqrt2q<R_{\mathrm{out}}$, and
$\log(1+e^{-qz})\le e^{-q}+q(1-z)_+$ gives
$\E_{\vx}\ell(h^\star(\vx),\ip{\widetilde{\va}}{\vphi(\vx)})\le e^{-q}+8q\varepsilon_0$.
On $\mathcal E^c$, its norm is zero and
$\ell(h^\star(\vx),\ip{\widetilde{\va}}{\vphi(\vx)})=\log2$.
Since $\Prob[\mathcal E^c]\le2\delta$ and $\varepsilon<1/4$,
\begin{equation}\label{eq:comparator-loss}
 \begin{aligned}
 \E\bigl[\E_{\vx}\ell(h^\star(\vx),\ip{\widetilde{\va}}{\vphi(\vx)})\bigr]
 &\le e^{-q}+8q\varepsilon_0+2\delta\log2
 =\varepsilon^3+\frac{24\varepsilon}{1000}
                  +\frac{2\varepsilon\log2}{1000}
                  \le\frac{\varepsilon\log2}{4}.
 \end{aligned}
\end{equation}

Stage~2 retains the initial output weights, so $\norm{\va^{(1)}}=1$.
Since $R_{\mathrm{out}}>1$, the squared distance to any comparator in the
projection ball is at most $4R_{\mathrm{out}}^2$.
Conditional on Stage~1, the population logistic loss is convex, and fresh
Stage-2 samples give unbiased gradients of norm at most $\norm{\vphi(\vx)}$.
Let $\va^{\mathrm{avg}}:=T^{-1}\sum_{t=1}^T\va^{(t)}$.
Applying the projected-SGD bound of
\citet[Eq.~(2.19)]{nemirovski2009robust}, averaging over Stage~1, and using
\eqref{eq:feature-moment}, we obtain
\[
 \begin{aligned}
 \E\Bigl[\E_{\vx}\bigl(
 \ell(h^\star(\vx),\ip{\va^{\mathrm{avg}}}{\vphi(\vx)})
 -\ell(h^\star(\vx),\ip{\widetilde{\va}}{\vphi(\vx)})\bigr)\Bigr]
 \le & \frac{2R_{\mathrm{out}}^2}{\eta T}+\frac\eta2(1+16m^5J) \\
\underset{(\star)}{\le} &\frac{11R_{\mathrm{out}}\sqrt{m^5J}}{\sqrt T}
 \underset{(\star\star)}{\le}\frac{\varepsilon\log2}{2},
 \end{aligned}
\]
where $(\star)$ uses the definition of $\eta$ and $(\star\star)$ follows from Lemma~\ref{lem:hyperparameters} in the appendix.
Together with \eqref{eq:comparator-loss}, 
\[
\E[\E_{\vx}\ell(h^\star(\vx),\ip{\va^{\mathrm{avg}}}{\vphi(\vx)})]
\le3\varepsilon\log2/4.
\]
A classification mistake incurs logistic loss at least $\log2$, so
\[
 \E\bigl[\Prob_{\vx}[\hat h(\vx)\ne h^\star(\vx)]\bigr]
 \le\frac{3\varepsilon}{4}<\varepsilon.
\]

Finally, we bound the computational cost.
Forming the literal encoding of each sample costs $O(d)\le O(P)$.
Computing the Stage-1 gradient costs $O(PB)$ arithmetic operations, and
Stage~2 costs $O(PT)$, including network evaluation, projection, and
averaging. Initialization and the hidden-layer update cost $O(P)$, so the
total cost is $O(Pn)$. Table~\ref{tab:params} gives
$\log B,\log J,\log T,\log P=O(m^{2/3}\log^2(m/\varepsilon))$.
Since $n=B+T$, this implies
$\log(Pn)=O(m^{2/3}\log^2(m/\varepsilon))$, proving \eqref{eq:resources}.
\end{proofof}

\section{Discussion}\label{sec:discussion}

Our result establishes that the computational advantage of learned
representations can persist in the distribution-free setting, resolving
the question of \citet{FKS}. Thus, dimension complexity cannot serve as a
universal runtime lower bound for natural gradient-based neural network
learning. In our construction, a single gradient step on the hidden layer
produces features sufficient for subsequent convex optimization of the
output layer.

A natural next question is whether ordinary SGD, without injected noise, can achieve a comparable distribution-free separation. Moreover, it would be interesting to understand for what classes of functions such separation results can be achieved.

\section*{AI Disclosure}
Generative AI assisted with writing and with formalizing mathematical claims.

\bibliography{references}
\appendix
\newpage

\section{Verification of the Hyperparameters}\label{app:parameters}

Table~\ref{tab:params} specifies Algorithm~\ref{alg:training} directly in terms
of $m$, $\varepsilon$, and the other training hyperparameters. Every
occurrence of $C_0$ refers to the same constant, fixed once for all $m$ and
$\varepsilon$. Lemma~\ref{lem:hyperparameters} in the appendix verifies the numerical
inequalities used in the proof; no proof
quantities need to be computed by the algorithm.

\begin{table}[ht]
\centering\small
\renewcommand{\arraystretch}{1.5}
\begin{tabular}{@{}ll>{$\displaystyle}l<{$}@{}}
\toprule
Hyperparameter & Role & \text{Choice}\\
\midrule
$B$ & Stage-1 batch size & \lceil C_0(m/\varepsilon)^4\rceil\\
$J$ & Neurons per block & \lceil\exp(C_0^3m^{2/3}\log^2(m/\varepsilon))\rceil\\
$\kappa$ & Initial weight scale & \frac1{C_0m^2\log(m/\varepsilon)}\\
$\sigma$ & Stage-1 noise standard deviation & \frac{m^2}{C_0\sqrt{\log B}}\\
$R_{\mathrm{out}}$ & Output-layer projection radius & C_0\log(1/\varepsilon)\\
$T$ & Stage-2 steps & \left\lceil\frac{C_0^3m^5J\log^2(1/\varepsilon)}{\varepsilon^2}\right\rceil\\
$\eta$ & Stage-2 step size & \frac{R_{\mathrm{out}}}{\sqrt{Tm^5J}}\\
\bottomrule
\end{tabular}
\caption{Hyperparameters of Algorithm~\ref{alg:training}, where $C_0>0$ is a universal constant chosen in the proof.}\label{tab:params}
\end{table}

\begin{lemma}[Hyperparameter verification]\label{lem:hyperparameters}
For every $m\ge2$ and $\varepsilon\in(0,1/4)$, the choices in
Table~\ref{tab:params}, with a sufficiently large universal $C_0$,
satisfy
\[
 \begin{aligned}
 M^2&\ge16(2m^2+1),\\
 B&\ge2s,\qquad
 B\ge\frac{2m}{\varepsilon_0}
       \left(s\log B+\log\frac{m(s+1)}\delta\right),\\
 \sqrt J&\ge\frac{3m^3v_{\max}}{\delta\varepsilon_0},\qquad
 \frac{4mv_{\max}^2}{J}<1,\\
 R_{\mathrm{out}}&\ge5\log(1/\varepsilon),\qquad
 \frac{11R_{\mathrm{out}}\sqrt{m^5J}}{\sqrt T}
 \le\frac{\varepsilon\log2}{2}.
 \end{aligned}
\]
Here the auxiliary quantities are those defined in Section~\ref{sec:proof}.
\end{lemma}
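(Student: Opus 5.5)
The plan is a direct verification: substitute the choices of Table~\ref{tab:params} and the auxiliary quantities of Section~\ref{sec:proof}, then compare logarithms. Let $L:=\log(m/\varepsilon)\ge\log 8$. Throughout, $C_0$ is taken large only at the end.

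First I would record the auxiliary scales:
\begin{itemize}
\item $r=\Theta(m^{4/3})$ (or $\lfloor m^2/2\rfloor$ for small $m$), so $s\le 2m^2/(r+1)=O(m^{2/3})$;
\item $k=\Theta(m^{2/3}L)$, $\Lambda=m^{100k}=\exp(O(m^{2/3}L\log m))$, $M=1/(16m\kappa)=C_0mL/16$;
\item $\varepsilon_0,\delta\ge\varepsilon^2/1000$;
\item $\tau=\varepsilon_0\kappa/(100m\Lambda M)$.
\end{itemize}

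The easy inequalities go as follows.
\begin{itemize}
\item $M^2\ge16(2m^2+1)$ holds once $C_0L\ge 48$.
\item $B\ge 2s$ is immediate, since $B\ge C_0(m/\varepsilon)^4\ge m^4$.
\item For the second $B$ condition, $\log B=O(\log C_0+L)$. The right side is then at most $O(m\varepsilon^{-2}\log(1/\varepsilon)\,(m^{2/3}(\log C_0+L)+L))$, which is $\le C_0(m/\varepsilon)^4$ for large $C_0$, because $(m/\varepsilon)^4$ dominates $m^{5/3}\varepsilon^{-3}L$ and the $\log C_0$ factor grows slower than $C_0$.
\item $R_{\mathrm{out}}\ge5\log(1/\varepsilon)$ needs $C_0\ge5$.
\item For the last condition, substituting $T$ gives $11R_{\mathrm{out}}\sqrt{m^5J}/\sqrt T\le 11C_0\log(1/\varepsilon)\,\varepsilon/(C_0^{3/2}\log(1/\varepsilon))=11\varepsilon/\sqrt{C_0}$, which is $\le\varepsilon\log2/2$ for $C_0$ large.
\end{itemize}

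The main work is the width conditions, which both reduce to showing $\log v_{\max}\le \tfrac12\log J-\log(3m^3/(\delta\varepsilon_0))-1$. Expanding,
\[
\log v_{\max}=\log 2+2\log\Lambda+2\log M+2\log(1/\kappa)+\log(100m/\varepsilon_0)+4C_0^2s\log B .
\]
I would bound the terms separately.
\begin{itemize}
\item $2\log\Lambda=200k\log m\le 200(10m^{2/3}L+1)L$, using $\log m\le L$. This is $O(m^{2/3}L^2)$ with an absolute constant not involving $C_0$.
\item The $\log M$, $\log(1/\kappa)$, and $\log(1/\varepsilon_0)$ terms are $O(\log C_0+L)$.
\item The delicate term is $4C_0^2s\log B\le 4C_0^2\cdot O(m^{2/3})\cdot(\log(2C_0)+4L)$. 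This carries $C_0^2$, which is why $J$ uses $C_0^3$ in the exponent.
\end{itemize}
Summing, $\log v_{\max}\le c\,(C_0^2\log C_0)\,m^{2/3}L^2$ for an absolute $c$, using $L\ge1$ to absorb the lower-order terms. Meanwhile $\tfrac12\log J\ge\tfrac12C_0^3m^{2/3}L^2$. So for $C_0$ large, $\tfrac12\log J$ exceeds $\log v_{\max}+\log(3m^3/(\delta\varepsilon_0))+2$, which gives $\sqrt J\ge 3m^3v_{\max}/(\delta\varepsilon_0)$.

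The second width condition then follows from the first. Since $\delta\varepsilon_0\le1$, we have $4mv_{\max}^2/J\le 4m(\delta\varepsilon_0)^2/(9m^6)<1$.

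The only real obstacle is checking that every constant in $\log v_{\max}$ is either free of $C_0$ or grows at most like $C_0^2\log C_0$, so that the cubic $C_0^3$ in $J$ wins. This is also where care is needed with the exact constants in $s$ and in $\log B$.
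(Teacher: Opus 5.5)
Your proposal is correct and follows the paper's verification essentially step for step: you substitute Table~\ref{tab:params}, bound $\log v_{\max}$ (together with $\log(3m^3/(\delta\varepsilon_0))$) by $O(C_0^2(1+\log C_0))\,m^{2/3}\log^2(m/\varepsilon)$ so that the $C_0^3$ in $\log J$ wins, get $11\varepsilon/\sqrt{C_0}$ for the Stage-2 bound, and derive $4mv_{\max}^2/J<1$ from the first width inequality, exactly as the paper does. One harmless slip: $C_0L\ge48$ only gives $M\ge3m$, which does not imply $M^2\ge16(2m^2+1)=32m^2+16$; you need roughly $C_0L\ge96$ (the paper takes $C_0\ge960$), and this costs nothing since $C_0$ is chosen large anyway.
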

\begin{proof}
Since $M=(C_0/16)m\log(m/\varepsilon)$, for $C_0\ge960$ we have
$M\ge60m\log8>6m$, hence $M^2>36m^2\ge16(2m^2+1)$.
\paragraph{Batch size.}
The two batch-size inequalities make the union bound in
Lemma~\ref{lem:compression} at most $\delta$.
Since $s=O(m^{2/3})$ and
$\log(1/\varepsilon_0)=O(\log(m/\varepsilon))$, the right-hand side of the
second inequality is
\[
 O\bigl((1+\log C_0)m^{5/3}\varepsilon^{-1}\log^2(m/\varepsilon)\bigr),
\]
with an implicit constant independent of $C_0$.
The choice $B=\lceil C_0(m/\varepsilon)^4\rceil$ satisfies both inequalities
uniformly for $m\ge2$ and $\varepsilon<1/4$ once $C_0$ is large enough.

\paragraph{Width.}
The proof uses
\[
 k=O(m^{2/3}\log(m/\varepsilon)),\qquad
 \log\Lambda=O(m^{2/3}\log(m/\varepsilon)\log m),\qquad
 \log(B^{4C_0^2s})=4C_0^2s\log B.
\]
Here $M=(C_0/16)m\log(m/\varepsilon)$ and
$1/\kappa=C_0m^2\log(m/\varepsilon)$. The definitions of $\tau$ and
$v_{\max}$ therefore give
\[
 \log\left(\frac{3m^3v_{\max}}{\delta\varepsilon_0}\right)
 \le O\bigl(C_0^2(1+\log C_0)\bigr)\,m^{2/3}\log^2(m/\varepsilon),
\]
where the implicit constant is independent of $C_0$.
For sufficiently large $C_0$, the coefficient $C_0^3/2$ in $\log\sqrt J$
dominates this bound, proving the first width inequality. It also gives
$4mv_{\max}^2/J\le4\delta^2\varepsilon_0^2/(9m^5)<1$.

\paragraph{Output training.}
Substituting the choices of $T$ and $R_{\mathrm{out}}$ gives
$11R_{\mathrm{out}}\sqrt{m^5J}/\sqrt T\le11\varepsilon/\sqrt{C_0}$,
hence the stated bound for sufficiently large $C_0$.
The radius condition holds once $C_0\ge5$.
Equation~\eqref{eq:noise-upper-tail} only requires $C_0\ge16$.
Thus all requirements are compatible: choose $C_0$ large enough to
dominate these fixed constants and the fixed multiples of
$1+\log C_0$ required by the batch and width bounds.

\end{proof}

\section{Proofs Omitted From the Main Text}
\subsection{Proof of Lemma~\ref{lem:activation}}\label{app:deferred}

\begin{proofof}{Lemma~\ref{lem:activation}}
Conditional on the batch, the events $E_{ij}(\mathcal U)$, $j\in[J]$,
are mutually independent and independent of the initialization, since
each depends only on its neuron's noise, which is drawn independently
across neurons and independently of the initialization.
Since $r\le m^2/2$ and $m\ge2$,
$t_-\ge m^2/8$ and $t_+\le2m^2$.
By the choice of $\sigma$ and the Gaussian Chernoff bound,
\begin{equation}\label{eq:noise-upper-tail}
 \Prob[\xi>t_-]\le B^{-C_0^2/128}\le B^{-2},
\end{equation}
where the last inequality uses $C_0\ge16$.
Gordon's inequality \citep{Gordon} states that, for $Z\sim N(0,1)$ and $x>0$,
\begin{equation}\label{eq:gaussian-tail-lower}
 \Prob[Z>x]\ge\frac{x}{(x^2+1)\sqrt{2\pi}}e^{-x^2/2}.
\end{equation}
Set $x:=2C_0\sqrt{\log B}$. Since $\xi/\sigma\sim N(0,1)$ and
$t_+/\sigma\le x$, \eqref{eq:gaussian-tail-lower} gives
\[
 \Prob[\xi>t_+]\ge\Prob[Z>x]
 \ge\frac{x}{(x^2+1)\sqrt{2\pi}}B^{-2C_0^2}
 \ge\frac{B^{-2C_0^2}}{4C_0\sqrt{2\pi\log B}}
 \ge B^{-3C_0^2}.
\]
The last two inequalities use $x^2+1\le2x^2$ and
$1/(4C_0\sqrt{2\pi\log B})\ge B^{-C_0^2}$, respectively,
which hold for $C_0\ge16$ and $B\ge2$.
For $B\ge2$, the probability that all samples outside $\mathcal U$ are
inactive is at least $(1-B^{-2})^B\ge1/2$. Hence
\[
 p(\mathcal U)\ge\tfrac12 B^{-3C_0^2|\mathcal U|}
                  \ge B^{-4C_0^2s}.
\]
The independent output sign and bias interval have probabilities $1/2$
and $(r+3)/(4m^2)$. Also, Markov's inequality and
$M^2\ge16(2m^2+1)$ give
\[
 \Prob[\norm{\vw_{ij}^{(0)}}>\kappa M]
 \le\frac{2m^2}{M^2}\le\frac1{16}.
\]
Consequently
\[
 \Prob[(i,j)\text{ is }\mathcal U\text{-useful}]
 =p(\mathcal U)\frac{r+3}{8m^2}
   \Prob[\norm{\vw_{ij}^{(0)}}\le\kappa M]
 \ge\frac{r+3}{16m^2}B^{-4C_0^2s}.
\]
Since $s=\Theta(m^{2/3})$ and $B=\lceil C_0(m/\varepsilon)^4\rceil$,
we have $\log B=O(\log(m/\varepsilon))$, so the logarithm of this lower bound is
$-O(m^{2/3}\log(m/\varepsilon))$.
\end{proofof}

\subsection{Proof of Lemma~\ref{lem:compression}}\label{app:compression}

\begin{proofof}{Lemma~\ref{lem:compression}}
Fix $i\in[m]$ and start with $\mathcal U=\emptyset$ so that $\mathcal Z_i(\mathcal U)=[2m^2]$.
If there is an index $u\in[B]$ with $\psi_i(\vx^{(u)})=1$ and
$\cnt{\vx_i^{(u)}}{\mathcal Z_i(\mathcal U)}>r$, choose one such $u$ and
set $\mathcal U\leftarrow\mathcal U\cup\{u\}$. Repeat until no such index exists.
Each addition removes
at least $r+1$ literals, so at most $s=\lfloor2m^2/(r+1)\rfloor$ additions
are possible. All selected samples satisfy $\psi_i$, hence
$V_i\subseteq\mathcal Z_i(\mathcal U)$. At termination, every batch sample
satisfying $\psi_i$ has count at most $r$.

Fix $A\subseteq[B]$ with $|A|=j\le s$ and condition on
$(\vx^{(u)})_{u\in A}$. Then $\mathcal Z_i(A)$ is fixed,
and the inputs indexed outside $A$ remain independent draws from $\Dcal$.
If
$\Prob_{\vx}[\psi_i(\vx)=1,\ \cnt{\vx_i}{\mathcal Z_i(A)}>r]
>\varepsilon_0/m$,
the probability that no $u\notin A$ satisfies both
$\psi_i(\vx^{(u)})=1$ and
$\cnt{\vx_i^{(u)}}{\mathcal Z_i(A)}>r$
is at most $(1-\varepsilon_0/m)^{B-j}$.
A union bound over all $A$ with $|A|\le s$ and all blocks $i$
therefore bounds the probability that any constructed set violates
the probability condition in \eqref{eq:good-set} by
\[
 m\sum_{j=0}^s\binom Bj(1-\varepsilon_0/m)^{B-j}
 \le m(s+1)B^s\exp\left(-\frac{\varepsilon_0(B-s)}m\right)
 \le\delta.
\]
The last inequality follows from Lemma~\ref{lem:hyperparameters} in the appendix.
Together with $V_i\subseteq\mathcal Z_i(\mathcal U)$, this proves
that all constructed sets are good with probability at least $1-\delta$.
\end{proofof}

\subsection{Proof of Lemma~\ref{lem:polynomial}}\label{app:chebyshev}

We now construct the polynomials $Q_i$ of \eqref{eq:Q-props},
following \citet[Lemma~3.2]{DKR}. We include the construction because the
analysis needs an explicit degree and a bound on the whole cube.

\begin{proofof}{Lemma~\ref{lem:polynomial}}
By the choice of $k$, we have $k\ge\sqrt{r+1}\log(4m/\varepsilon_0)$.
Fix a block $i$. For $\vz\in\{0,1\}^{2m^2}$, since $r\ge2$, let
\[
 Q_i(\vz):=
 \frac{\Cheb_k((r+1-2\cnt{\vz}{V_i})/(r-1))}{\Cheb_k((r+1)/(r-1))},
\]
where $\Cheb_k$ is the Chebyshev polynomial of the first kind of degree
$k$. Then $Q_i(\vz)=1$ when $\cnt{\vz}{V_i}=0$. For
$1\le\cnt{\vz}{V_i}\le r$, we have
\[
 -1\le\frac{r+1-2\cnt{\vz}{V_i}}{r-1}\le1,
\]
so the numerator has magnitude at most one. For
the denominator, $\Cheb_k(\cosh t)=\cosh(kt)\ge e^{kt}/2$ and
\[
 \operatorname{arcosh}\frac{r+1}{r-1}
 =\log\frac{\sqrt r+1}{\sqrt r-1}\ge\frac2{\sqrt r},
\]
so the denominator is at least $e^{2k/\sqrt r}/2$, which exceeds
$m/\varepsilon_0$ by the choice of $k$. Hence
$|Q_i(\vz)|\le\varepsilon_0/m$ whenever $1\le\cnt{\vz}{V_i}\le r$.

On the whole cube, $0\le\cnt{\vz}{V_i}\le|V_i|\le m^2$, so the argument of the
Chebyshev polynomial has absolute value at most $2(2m^2+1)$. The bound
$|\Cheb_k(t)|\le(2\max\{1,|t|\})^k$ and the fact that the denominator is at
least one give $|Q_i|\le(4(2m^2+1))^{k+1}$.
Since $\cnt{\vz}{V_i}$ is linear in $\vz$, reducing powers $z_a^j$ to $z_a$ makes $Q_i$
a multilinear polynomial of degree at most $k$.
\end{proofof}

\subsection{Proof of Lemma~\ref{lem:gaussian}}\label{app:gaussian}

In this section, we use Fourier Analysis on the uniform cube.
As such, we may consider a uniform distribution on $\vu\in\{0,1\}^{m^2}$, in order to use Fourier Analysis which will in turn be used to prove \emph{pointwise} identities (holding for a fixed $\vu$).
To ease notation, we work directly with encoded blocks $\vz(\vu):=(\vu,\vone-\vu)$.

\begin{proofof}{Lemma~\ref{lem:gaussian}}
For $A\subseteq[m^2]$ let $\chi_A(\vu):=\prod_{a\in A}(2u_a-1)$.
These functions form an orthonormal basis on the uniform Boolean cube:
$\E_{\vu}[\chi_A(\vu)\chi_{A'}(\vu)]=\ind[A=A']$.
All Fourier coefficients and $L_2$ norms below use this uniform distribution.
In particular,
$\widehat Q(A):=\E_{\vu}[Q(\vz(\vu))\chi_A(\vu)]$ and
$\widehat\rho_{\vw}(A):=\E_{\vu}[\rho_{\vw}(\vz(\vu))\chi_A(\vu)]$.
Substituting $\vz(\vu)$ in $Q$ and reducing powers of Boolean variables
preserves its degree bound $k$, hence $\widehat Q(A)=0$ for $|A|>k$.

\paragraph{The kernel is diagonal in the Walsh basis.}
We first compute the expected product of two features from the same random
weight $\vw$. The Gaussian ReLU kernel formula \citep{ChoSaul} states that
for $\vw\sim N(\vzero,\kappa^2I_{2m^2})$ and any
$\vz,\vz'\in\R^{2m^2}$ with $\norm{\vz}=\norm{\vz'}=m$,
\begin{equation}\label{eq:cho-saul}
 \E_{\vw}[\relu(\ip{\vw}{\vz})\relu(\ip{\vw}{\vz'})]
 =\kappa^2m^2K\bigl(\ip{\vz}{\vz'}/m^2\bigr),
\end{equation}
where $K(t):=\bigl(\sqrt{1-t^2}+(\pi-\arccos t)t\bigr)/(2\pi)$.
All encoded blocks $\vz(\vu)$ have norm $m$. For $\vu,\vv\in\{0,1\}^{m^2}$, define
$\boldsymbol\omega(\vu,\vv)\in\{\pm1\}^{m^2}$ by
$\omega_a:=(2u_a-1)(2v_a-1)$, so $\omega_a=1$ if $u_a=v_a$ and
$\omega_a=-1$ otherwise. The inner product $\ip{\vz(\vu)}{\vz(\vv)}$ counts
the coordinates $a$ with $u_a=v_a$, so it equals
$\sum_a(1+\omega_a)/2=m^2t(\boldsymbol\omega)$, where
$t(\boldsymbol\omega):=\frac12+\frac1{2m^2}\sum_{a=1}^{m^2}\omega_a$.
Applying \eqref{eq:cho-saul} with $\vz=\vz(\vu)$ and $\vz'=\vz(\vv)$, and
dividing by $\kappa^2(2m^2+1)^2$, gives
\begin{equation}\label{eq:gaussian-kernel}
 \E_{\vw}[\rho_{\vw}(\vz(\vu))\rho_{\vw}(\vz(\vv))]
 =g(\boldsymbol\omega(\vu,\vv)),
 \qquad
 g(\boldsymbol\omega):=\frac{m^2}{(2m^2+1)^2}K(t(\boldsymbol\omega)).
\end{equation}
For $A\subseteq[m^2]$, write
$\boldsymbol\omega^A:=\prod_{a\in A}\omega_a$ and define
\begin{equation}\label{eq:gaussian-lambda}
 \lambda_A:=\E_{\boldsymbol\omega}[g(\boldsymbol\omega)\boldsymbol\omega^A],
\end{equation}
where $\boldsymbol\omega$ is uniform on $\{\pm1\}^{m^2}$.
Now fix $\vu$. The map $\vv\mapsto\boldsymbol\omega(\vu,\vv)$ is a bijection
from $\{0,1\}^{m^2}$ to $\{\pm1\}^{m^2}$, so it maps the uniform $\vv$ to
the uniform $\boldsymbol\omega$. Since $(2u_a-1)^2=1$, we have
$2v_a-1=\omega_a(2u_a-1)$, and therefore
$\chi_A(\vv)=\chi_A(\vu)\boldsymbol\omega^A$. Substituting the definition of
$\widehat\rho_{\vw}(A)$ and exchanging the expectation over $\vw$ with the
finite average over $\vv$ gives
\begin{equation}\label{eq:gaussian-covariance}
 \begin{aligned}
 \E_{\vw}[\widehat\rho_{\vw}(A)\rho_{\vw}(\vz(\vu))]
 &=\E_{\vv}\bigl[\chi_A(\vv)\,
   \E_{\vw}[\rho_{\vw}(\vz(\vv))\rho_{\vw}(\vz(\vu))]\bigr]\\
 &=\chi_A(\vu)\,\E_{\boldsymbol\omega}[\boldsymbol\omega^A
   g(\boldsymbol\omega)]
 =\lambda_A\chi_A(\vu).
 \end{aligned}
\end{equation}
The second equality uses \eqref{eq:gaussian-kernel} and the change of
variables $\vv\mapsto\boldsymbol\omega$, and the last uses
\eqref{eq:gaussian-lambda}.

\paragraph{Lower bounding $\lambda_A$.}
We first expand $K$ in a power series. Since
$\pi-\arccos t=\pi/2+\arcsin t$, we have
$K(t)=\frac t4+\frac1{2\pi}\bigl(\sqrt{1-t^2}+t\arcsin t\bigr)$. The
function $\sqrt{1-t^2}+t\arcsin t$ equals $1$ at $t=0$, and its derivative
is $\arcsin t=\sum_{j\ge1}\binom{2j-2}{j-1}t^{2j-1}/(4^{j-1}(2j-1))$.
Integrating this series term by term gives, for $t\in[-1,1]$ (at
$t=\pm1$ by continuity),
\begin{equation}\label{eq:gaussian-kernel-series}
 K(t)=\sum_{n\ge0}\mu_nt^n,
 \qquad
 \mu_0=\frac1{2\pi},\quad\mu_1=\frac14,\quad
 \mu_{2j}=\frac{\binom{2j-2}{j-1}}{2\pi\,4^{j-1}(2j-1)(2j)}\ (j\ge1),
\end{equation}
and $\mu_n=0$ for odd $n\ge3$. Since $\binom{2j-2}{j-1}\le4^{j-1}$, we
have $\mu_{2j}\le1/((2j-1)(2j))$, so the series converges absolutely on
$[-1,1]$. Moreover, every $\mu_n\ge0$, and $\mu_{2j}\ge1/(8(2j)^3)$ since
$\binom{2j-2}{j-1}\ge4^{j-1}/(2j-1)$ and $2\pi<8$.

Substituting \eqref{eq:gaussian-kernel-series} into the definition of $g$
in \eqref{eq:gaussian-kernel}, and then into \eqref{eq:gaussian-lambda},
gives
\begin{equation}\label{eq:gaussian-lambda-series}
 \lambda_A=\frac{m^2}{(2m^2+1)^2}\sum_{n\ge0}\mu_n\,
 \E_{\boldsymbol\omega}\bigl[t(\boldsymbol\omega)^n\boldsymbol\omega^A\bigr].
\end{equation}
Exchanging the series with the finite average over $\boldsymbol\omega$ is
valid because $t(\boldsymbol\omega)\in[0,1]$ and the series converges
absolutely there.

Since every $\mu_n\ge0$, to lower-bound $\lambda_A$ it suffices to show
that every expectation in \eqref{eq:gaussian-lambda-series} is
nonnegative, and then to lower-bound a single one. Expanding
$t(\boldsymbol\omega)^n=\bigl(\frac12+\sum_a\omega_a/(2m^2)\bigr)^n$ gives
a sum of products of the $\omega_a$ with positive coefficients, in which
a coordinate $\omega_a$ may appear several times. Since $\omega_a^2=1$,
each such product equals $\boldsymbol\omega^B=\prod_{a\in B}\omega_a$,
where $B\subseteq[m^2]$ is the set of coordinates appearing an odd number
of times in it. Collecting terms gives
$t(\boldsymbol\omega)^n=\sum_{B\subseteq[m^2]}c_B\boldsymbol\omega^B$,
where $c_B\ge0$ is the sum of the coefficients of the products that equal
$\boldsymbol\omega^B$. Orthonormality of the $\boldsymbol\omega^B$ gives
\begin{equation}\label{eq:gaussian-power-coefficient}
 \E_{\boldsymbol\omega}\bigl[t(\boldsymbol\omega)^n\boldsymbol\omega^A\bigr]
 =c_A\ge0.
\end{equation}
If $n\ge|A|$, the expansion contains, for each of the $|A|!$ orderings of
$A$, the term that takes $\omega_a/(2m^2)$ for the elements $a\in A$ from
the first $|A|$ factors in that order, and $\frac12$ from the remaining
factors. Each such term is $\boldsymbol\omega^A$ times
$2^{-(n-|A|)}(2m^2)^{-|A|}\ge(2m^2)^{-n}$, so $c_A\ge|A|!/(2m^2)^n$.

For nonempty $A$, we keep only the term in
\eqref{eq:gaussian-lambda-series} with $n$ the smallest even integer at
least $|A|$, so that $n\le|A|+1$. Using $\mu_n\ge1/(8n^3)$,
\begin{equation}\label{eq:gaussian-eigenvalue}
 \lambda_A
 \ge\frac{m^2}{(2m^2+1)^2}\cdot\frac{|A|!}{8n^3(2m^2)^n}
 \ge\frac{m^2}{128(2m^2+1)^2(2m^2)^n}
 \ge\frac1{128(2(2m^2+1))^{|A|+3}}.
\end{equation}
The second inequality uses $|A|!/n^3\ge|A|!/(|A|+1)^3\ge1/16$. The last
uses $(2m^2)^n\le(2(2m^2+1))^{|A|+1}$ and
$(2m^2+1)^2/m^2\le(2(2m^2+1))^2$.

\paragraph{An exact representation before truncation.}
Fourier inversion gives
\[
 Q(\vz(\vu))=\widehat Q(\emptyset)+\sum_{1\le|A|\le k}\widehat Q(A)\chi_A(\vu).
\]
For every nonempty $A$ in this sum, \eqref{eq:gaussian-eigenvalue}
ensures $\lambda_A>0$, so dividing \eqref{eq:gaussian-covariance} by
$\lambda_A$ gives
\[
 \E_{\vw}\left[
 \frac{\widehat\rho_{\vw}(A)}{\lambda_A}\rho_{\vw}(\vz(\vu))\right]
 =\chi_A(\vu).
\]
Thus, multiplying these coefficients by $\widehat Q(A)$ and summing
represents the nonconstant part of $Q$.
For the constant term $\widehat Q(\emptyset)$, the analogous coefficient
$\widehat\rho_{\vw}(\emptyset)/\lambda_\emptyset$ has positive mean, since
$\rho_{\vw}\ge0$, so it would violate property~(ii). Instead, we use
$\ip{\vw}{\vone}/(\kappa\sqrt{2m^2})$, whose mean is zero.
Expanding \eqref{eq:rho} and applying Gaussian integration by parts,
$\E_{\vw}[\ip{\vw}{\vone}f(\vw)]=\kappa^2\E_{\vw}[\ip{\vone}{\nabla f(\vw)}]$
with $f(\vw)=\relu(\ip{\vw}{\vz(\vu)})$, gives
\begin{equation}\label{eq:gaussian-constant}
 \begin{aligned}
 \E_{\vw}\left[
 \frac{\ip{\vw}{\vone}}{\kappa\sqrt{2m^2}}\rho_{\vw}(\vz(\vu))\right]
 &=\frac{\E_{\vw}[\ip{\vw}{\vone}\relu(\ip{\vw}{\vz(\vu)})]}
 {\kappa^2\sqrt{2m^2}(2m^2+1)}\\
 &=\frac{\ip{\vone}{\vz(\vu)}}{\sqrt{2m^2}(2m^2+1)}
   \Prob[\ip{\vw}{\vz(\vu)}>0]
 =\frac{m}{2\sqrt2(2m^2+1)},
 \end{aligned}
\end{equation}
where the last equality uses $\ip{\vone}{\vz(\vu)}=m^2$ and
$\Prob[\ip{\vw}{\vz(\vu)}>0]=1/2$.
Combining the coefficients for the constant and nonconstant terms, define
\begin{equation}\label{eq:gaussian-coefficient}
 \alpha_Q^0(\vw):=
 \frac{2\sqrt2(2m^2+1)}m\widehat Q(\emptyset)
 \frac{\ip{\vw}{\vone}}{\kappa\sqrt{2m^2}}
 +\sum_{1\le|A|\le k}
 \frac{\widehat Q(A)\widehat\rho_{\vw}(A)}{\lambda_A}.
\end{equation}
Equations~\eqref{eq:gaussian-covariance} and \eqref{eq:gaussian-constant}
then give the exact representation
\begin{equation}\label{eq:gaussian-exact-representation}
 \E_{\vw}[\alpha_Q^0(\vw)\rho_{\vw}(\vz(\vu))]=Q(\vz(\vu)).
\end{equation}

\paragraph{Bounding and normalizing the coefficient.}
To obtain property~(i), we bound $\alpha_Q^0$ and restrict it to
$\norm{\vw}\le\kappa M$.
For the nonconstant terms in \eqref{eq:gaussian-coefficient},
\eqref{eq:gaussian-eigenvalue} with $|A|\le k$, Cauchy--Schwarz, and
Parseval give
\[
 \begin{aligned}
 \left|\sum_{1\le|A|\le k}
 \frac{\widehat Q(A)\widehat\rho_{\vw}(A)}{\lambda_A}\right|
 &\le128(2(2m^2+1))^{k+3}
 \sqrt{\sum_A\widehat Q(A)^2}\sqrt{\sum_A\widehat\rho_{\vw}(A)^2}\\
 &=128(2(2m^2+1))^{k+3}\norm{Q\circ\vz}_{L_2}
 \norm{\rho_{\vw}\circ\vz}_{L_2}.
 \end{aligned}
\]
The constant term in \eqref{eq:gaussian-coefficient} is a product of three
factors, which we bound by $2\sqrt2(2m^2+1)/m\le4(2m^2+1)$,
$|\widehat Q(\emptyset)|\le\norm{Q\circ\vz}_{L_2}$, and
$|\ip{\vw}{\vone}|/(\kappa\sqrt{2m^2})\le\norm{\vw}/\kappa$.
Finally, $\norm{Q\circ\vz}_{L_2}\le\max|Q|\le(4(2m^2+1))^{k+1}$, and
$\norm{\rho_{\vw}\circ\vz}_{L_2}\le\norm{\vw}/\kappa$ because
$\rho_{\vw}(\vz(\vu))\le m\norm{\vw}/(\kappa(2m^2+1))$. Substituting these
bounds in \eqref{eq:gaussian-coefficient} gives
\begin{equation}\label{eq:gaussian-coefficient-bound}
 |\alpha_Q^0(\vw)|\le
 \left(4(2m^2+1)+128(2(2m^2+1))^{k+3}\right)
 (4(2m^2+1))^{k+1}\frac{\norm{\vw}}\kappa
 \le\frac\Lambda2\frac{\norm{\vw}}\kappa.
\end{equation}
For the last inequality, $m\ge2$ and $k\ge1$ give
$4(2m^2+1)\le m^6$, $2(2m^2+1)\le m^5$, and $128\le m^7$.
The numerical factor is therefore at most $m^{11k+29}\le m^{100k}/2=\Lambda/2$.
The sum in \eqref{eq:gaussian-coefficient} is over subsets of $[m^2]$;
if $k>m^2$, it includes every nonempty subset and the bound still holds.
By \eqref{eq:rho} and \eqref{eq:gaussian-coefficient}, for every $c>0$,
$\rho_{c\vw}(\vz)=c\rho_{\vw}(\vz)$ and
$\alpha_Q^0(c\vw)=c\alpha_Q^0(\vw)$.
Their product therefore scales by $c^2$. Since the Gaussian radius and
direction are independent, restricting the radius to
$\norm{\vw}\le\kappa M$ multiplies the expectation of this product by the
fraction of the second moment inside the ball,
\begin{equation}\label{eq:gaussian-radial-normalization}
 \nu:=\frac{\E_{\vw}[\norm{\vw}^2\ind[\norm{\vw}\le\kappa M]]}
 {\E_{\vw}\norm{\vw}^2}.
\end{equation}
Explicitly, write $\vw=R\vtheta$ with $R:=\norm{\vw}$ and
$\vtheta:=\vw/\norm{\vw}$. Then, for every $\vu$,
\begin{equation}\label{eq:gaussian-polar}
 \begin{aligned}
 \E_{\vw}\bigl[\alpha_Q^0(\vw)\rho_{\vw}(\vz(\vu))
 \ind[\norm{\vw}\le\kappa M]\bigr]
 &=\E[R^2\ind[R\le\kappa M]]\,
  \E_{\vtheta}[\alpha_Q^0(\vtheta)\rho_{\vtheta}(\vz(\vu))]\\
 &=\nu\,\E_{\vw}[\alpha_Q^0(\vw)\rho_{\vw}(\vz(\vu))],
 \end{aligned}
\end{equation}
where the last equality applies the same factorization without the
indicator.
The Gaussian moments $\E_{\vw}\norm{\vw}^2=2m^2\kappa^2$ and
$\E_{\vw}\norm{\vw}^4=2m^2(2m^2+2)\kappa^4$ give
\[
 1-\nu\le
 \frac{\E_{\vw}\norm{\vw}^4}{\kappa^2M^2\E_{\vw}\norm{\vw}^2}
 =\frac{2m^2+2}{M^2}\le\frac12,
\]
where the last inequality uses $M^2\ge16(2m^2+1)$ from
Lemma~\ref{lem:hyperparameters} in the appendix. Define
\begin{equation}\label{eq:gaussian-truncated-coefficient}
 \alpha_Q(\vw):=\frac{\alpha_Q^0(\vw)}\nu
 \ind[\norm{\vw}\le\kappa M].
\end{equation}
Since $\nu\ge1/2$, \eqref{eq:gaussian-coefficient-bound} gives
$|\alpha_Q(\vw)|\le\Lambda M$, and
\eqref{eq:gaussian-truncated-coefficient} makes it zero outside the ball.
This proves property~(i).

\paragraph{Exact centering after truncation.}
The Gaussian distribution restricted to a ball is rotationally
invariant, and all encoded blocks have norm $m$. Hence
$\E_{\vw}[\rho_{\vw}(\vz(\vu))\ind[\norm{\vw}\le\kappa M]]$ is constant in $\vu$.
For $A\ne\emptyset$, it follows that
\[
 \begin{aligned}
 \E_{\vw}[\widehat\rho_{\vw}(A)\ind[\norm{\vw}\le\kappa M]]
 &=\E_{\vu}\left[\chi_A(\vu)
 \E_{\vw}[\rho_{\vw}(\vz(\vu))\ind[\norm{\vw}\le\kappa M]]\right]=0.
 \end{aligned}
\]
Also, $\E_{\vw}[\ip{\vw}{\vone}\ind[\norm{\vw}\le\kappa M]]=0$ by symmetry.
Applying these identities to \eqref{eq:gaussian-coefficient} and
\eqref{eq:gaussian-truncated-coefficient} gives
$\E_{\vw}\alpha_Q(\vw)=0$, proving property~(ii).

\paragraph{Exact representation after truncation.}
By \eqref{eq:gaussian-truncated-coefficient},
\eqref{eq:gaussian-polar}, and
\eqref{eq:gaussian-exact-representation}, for every $\vu$,
\begin{equation}\label{eq:gaussian-truncated-representation}
 \begin{aligned}
 \E_{\vw}[\alpha_Q(\vw)\rho_{\vw}(\vz(\vu))]
 &=\frac1\nu\E_{\vw}\left[
 \alpha_Q^0(\vw)\rho_{\vw}(\vz(\vu))\ind[\norm{\vw}\le\kappa M]\right]\\
 &=\E_{\vw}[\alpha_Q^0(\vw)\rho_{\vw}(\vz(\vu))]
 =Q(\vz(\vu)).
 \end{aligned}
\end{equation}
This proves property~(iii).
\end{proofof}

\subsection{Proof of Lemma~\ref{lem:grid}}\label{app:grid}

\begin{proofof}{Lemma~\ref{lem:grid}}
Since $q,j$ are integers and $|\zeta|\le1/16$,
\[
 \relu(j-q+\zeta)-\relu(j-\tfrac12-q+\zeta)
 =\begin{cases}
 0 & j<q,\\
 \relu(\zeta) & j=q,\\
 \tfrac12 & j>q.
 \end{cases}
\]
Summing over $j=0,\ldots,r$ gives \eqref{eq:grid-identity}.
\end{proofof}

\end{document}